\newtheorem{thm}{Theorem}
\newtheorem{lemma}{Lemma}
\newtheorem{cor}{Corollary}
\newtheorem{definition}{Definition}
\title{CCMN: A General Framework for Learning with Class-Conditional Multi-Label Noise}
\date{}
\author{
Ming-Kun Xie and Sheng-Jun Huang\thanks{Correspondence to: Sheng-Jun Huang}\\
\\
  College of Computer Science and Technology, \\
  Nanjing University of Aeronautics and Astronautics\\
  MIIT Key Laboratory of Pattern Analysis and Machine Intelligence\\
  Collaborative Innovation Center of Novel Software Technology and Industrialization, \\
  Nanjing, 211106\\
  \texttt{\{mkxie,huangsj\}@nuaa.edu.cn}
}
\begin{document}

\def \p {\bm{p}}
\def \Rd {\mathbb{R}^{d}}
\def \x {\bm{x}}
\def \y {\bm{y}}
\def \ls {\{0,1\}^{q}}
\def \fs {\mathbb{R}^{d\times n}}
\def \X {\mathbf{X}}
\def \Y {\mathbf{Y}}
\def \tildeY {\widetilde{\mathbf{Y}}}
\def \hatV {\bm{\widehat{V}}}
\def \b {\mathbf{b}}
\def \s {\mathbf{s}}
\def \V {\bm{V}}
\def \bphi {\bm{\phi}}
\def \v {\bm{v}}
\def \Vstar {\V^{*}}
\def \L {\mathcal{L}}
\def \tL {\tilde{\mathcal{L}}}
\def \W {\bm{W}}
\def \U {\bm{U}}
\def \hatU {\widehat{\U}}
\def \hatv {\widehat{\v}}
\def \hatu {\widehat{\u}}
\def \t {\mathbf{t}}
\def \u {\bm{u}}
\def \r {{\mathbf{rank}}}
\def \tr {{\rm tr}}
\def \bPhi {\bm{\Phi}}
\def \Sig {\bm{\Sigma}}
\def \F {{\mathcal{F}}}
\def \Z {\bm{Z}}
\def \Lam {\mathbf{\Lambda}}
\def \argmin {{\rm argmin}}
\def \prox {\mathbf{prox}}
\def \E {\mathbb{E}}
\def \H {\bm{H}}
\def \hell {\hat{\ell}}
\def \ty {\tilde{\bm{y}}}
\def \hf {\hat{f}}
\def \hR {\widehat{R}}
\def \D {\mathcal{D}}
\def \pred {{\rm{pred}}}
\def \tell {\tilde{\ell}}
\def \R {\mathcal{R}}
\def \h {\bm{h}}
\def \tf {\tilde{f}}
\def \tLova {\tilde{\mathcal{L}}_{\rm{OVA}}}
\def \Lova {\mathcal{L}_{\rm{OVA}}}
\def \tLpc {\tilde{\mathcal{L}}_{\rm{PC}}}
\def \Lpc {\mathcal{L}_{\rm{PC}}}
\def \Lr {\mathcal{L}_{r}}
\def \Lh {\mathcal{L}_{h}}
\def \tLr {\tilde{\mathcal{L}}_{r}}
\def \tLh {\tilde{\mathcal{L}}_{h}}
\def \tR {\tilde{R}}
\def \tphi {\tilde{\phi}}
\def \tpsi {\tilde{\psi}}
\def \tPhi {\tilde{\Phi}}
\def \Pr {{\rm{Pr}}}
\def \f {\bm{f}}

\maketitle

\begin{abstract}
	Class-conditional noise commonly exists in machine learning tasks, where the class label is corrupted with a probability depending on its ground-truth. Many research efforts have been made to improve the model robustness against the class-conditional noise. However, they typically focus on the single label case by assuming that only one label is corrupted. In real applications, an instance is usually associated with multiple labels, which could be corrupted simultaneously with their respective conditional probabilities. In this paper, we formalize this problem as a general framework of learning with Class-Conditional Multi-label Noise (CCMN for short). We  establish two unbiased estimators with error bounds for solving the CCMN problems, and further prove that they are consistent with commonly used multi-label loss functions. Finally, a new method for partial multi-label learning is implemented with unbiased estimator under the CCMN framework. Empirical studies on multiple datasets and various evaluation metrics validate the effectiveness of the proposed method.
	
\end{abstract}

\section{Introduction}

In ordinary supervised classification problems, a common assumption is that the class labels of training data are always correct. However, in practice, the assumption hardly holds since the training examples are usually corrupted due to unavoidable reasons, such as measurement error, subjective labeling bias or human labeling error. Learning in presence of label noise has been a problem of theoretical as well as practical interest in machine learning communities \cite{FB14}. In various applications, such as the image annotation \cite{XT15} and text classification \cite{jindal2019noisetext}, many successful methods have been applied to learning with label noise.

In general, to deal with noise-corrupted data, it is crucial to discover the causes of label noise. A natural and simple formulation of label noise is that the labels are corrupted by a random noise process \cite{BB11}, which can be described by the random classification noise (RCN) framework \cite{AD88}. RCN framework assumes that each label is flipped independently with a specific probability $\rho\in[0,\frac{1}{2})$. Despite the great impact that RCN framework has made, it is too simple to deal with practical tasks, where the cause of label noise may not follow a random process. In order to deal with such problems, in \cite{SG09,NND13}, authors propose a class-conditional random label noise (CCN) framework, where the probability of a label flipped depends on its true label. Unfortunately, CCN framework only considers single label case and fail to deal with multiple noisy labels, where multiple labels assigned to one instance may be corrupted simultaneously.

In this paper, we extend CCN to a more general framework of Class Conditional Multi-label Noise (CCMN) to learn with multi-class and multi-label classification tasks. In CCMN framework, each instance is associated with multiple labels and each of class labels may be flipped with a probability $\rho^{j}_{+1}$ or $\rho^{j}_{-1}$, which depends on its true label $y_{j}\in\{+1,-1\}$. It is worth noting that a great deal of real-world applications in weak-supervised setting \cite{ZZH18} can be included into the framework, such as learning from partial-labeled data \cite{TB11,MK18} or learning with missing labels \cite{YY10}, which we will discuss detailedly in following content. To the best of our knowledge, general theoretical results in this setting have not been developed before.

To tackle corrupted data with class-conditional multiple noisy labels, we derive a modified loss function for learning a multi-label classifier with risk minimization. Theoretically, we show that the empirical risk minimization with modified loss functions can be in an unbiased fashion from independent and dependent perspectives. We then provide the estimation error bound for the two unbiased estimator and further show that learning with class-conditional multiple noisy labels can be multi-label consistent to two commonly used losses, i.e., hamming loss and ranking loss, respectively. Finally, CCMN can be regarded as a general framework of various weakly-supervised learning scenarios, such as partial label learning \cite{TB11}, partial multi-label learning \cite{MK18} and weak label learning \cite{YY10}. Among them, partial multi-label learning (PML) is a recently proposed weakly-supervised learning scenario. We further propose a new approach for partial multi-label learning with unbiased estimator. Comprehensive  empirical studies demonstrate the effectiveness of the proposed methods.


Our main contributions are summarized as follows:
\begin{itemize}
	\item A general framework of learning from class-conditional multi-label noise is proposed. Varied weakly-supervised learning scenarios can be cast under CCMN framework.
	\item We propose two unbiased estimators for solving CCMN problems in independent and dependent fashion. These two estimators are proven to be multi-label consistent to hamming loss and ranking loss, respectively. 
	\item A novel approach for Partial Multi-label Learning with unbiased estimator (uPML) is proposed under the CCMN framework. 
\end{itemize}

\section{Related Works}
There are a great deal of previous works aiming to learn a robust classification model in presence of label noise. Most of these methods only consider single label case and cannot tackle class-conditional multi-label noise. 

There has been a long line of studies in machine learning community on random label noise. The earliest work by Angluin and Laird \cite{AD88} first propose the \textit{random classification noise} (RCN) model. In \cite{FY09}, authors propose a boosting-based method and empirically show its robustness to random label noise. A theoretical analysis proposed in \cite{LPM10} proves that any method based on convex surrogate loss is inherently ill-suited to random label noise. In \cite{BB11}, authors consider random and adversarial label noise and try to utilize a robust SVM to deal with label noise. 


The first attempt to tackle class-conditional label noise is in \cite{SG09} where authors propose a variant of SVM to deal with noisy labels with theoretical guarantee. In \cite{NND13}, authors propose unbiased estimators to solve class-conditional label noise in binary classification setting. Based on the assumption that the class-conditional distributions may overlap, a method called mixture proportion estimation is proposed in \cite{SC13} to estimate the maximal proportion of one distribution that is present in another. Furthermore, authors extend the method into multi-class setting in \cite{BG14}. 


Thanks to the great development of deep learning, there are various methods raised for utilizing deep neural networks to handle noisy labels, such as label correction methods \cite{VA17,LYC17}, loss correction methods \cite{HC19,XHW18,HB18}, sample reweighting methods \cite{KMP10,ren2018l2rw,xia2019anchor} and robust loss methods \cite{GA17,ZZ18,LYM19}.

In this paper, we also employ the CCMN framework to solve partial multi-label learning problems. In order to deal with partial-labeled data, the most commonly used strategy is disambiguation \cite{TB11,gong2017pll}, which recovers ground-truth labeling information for candidate labels. Some methods perform disambiguation strategy by estimating a confidence for each candidate label \cite{MK18,zhang2020pml}. Other methods utilize decomposition scheme \cite{LS19} or adversarial training \cite{yan2019apml}. However, aforementioned methods never consider the generation process of noisy labels in candidate label set, which is a essential information for solving PML problems. In \cite{xie2020pmlni}, authors first consider modeling the relationship between noisy labels and feature representation. Nevertheless, all of these methods failed to prove the consistency of the proposed method.




\section{Formulation of CCMN}


Let $\x\in\mathcal{X}$ be a feature vector and $\y\in\mathcal{Y}$ be its corresponding label vector, where $\mathcal{X}\subset\mathbb{R}^{d}$ is the feature space and $\mathcal{Y}\subset\{-1,1\}^{q}$ is the target space with $q$ possible class labels. For notational convenience, the label $y_{j}$ can be denoted by its index $j$. In the setting, $y_{j}=1$ indicates the $j$-th label is a true label for instance $\x$; $y_{j}=-1$, otherwise. In this paper, we focus on the multi-label learning problem, where each instance may be assigned with more than one label, i.e., $\sum_{j=1}^{q}I(y_ {j}=1)\geq 1$ holds, where $I$ is the indicator function. Let $S=\{(\x_{1},\y_{1}),...,(\x_{n},\y_{n})\}$ be the given training data set, drawn \textit{i.i.d.} according to the true distribution $\D$. We also use $[q]$ to denote the integer set $\{1,...,q\}$.

In this paper, for instance $\x$, its corresponding label $\y$ can be corrupted and may be flipped into $\ty$ following a class-conditional multi-label noise model as follows:
\begin{eqnarray*}
	&\Pr(\tilde{y}_{j}=-1|{y}_{j}=+1)=\rho_{+1}^{j},\\ &\Pr(\tilde{y}_{j}=+1|{y}_{j}=-1)=\rho_{-1}^{j},\\
	& \forall j\in[q], \rho_{+1}^{j}+\rho_{-1}^{j}< 1.
\end{eqnarray*}
where $\rho_{+1}^{j}$ and $\rho_{-1}^{j}$ are noise rates for each class label and are assumed to be known to the learner. 



After injecting random noise into original samples $S$, the observed data set $S_{\rho}=\{(\x_{1},\ty_{1}),...,(\x_{n},\ty_{n})\}$ are drawn \textit{i.i.d.}, according to distribution $\D_{\rho}$. Our goal is to learn a prediction function $\h:\mathcal{X}\rightarrow\mathcal{Y}$ can accurately predict labels for any unseen instance. In general, it is not easy to learn $\h$ directly, and alternatively, one usually learns a real-valued decision function $\f:\mathcal{X}\rightarrow\mathbb{R}^{K}$. Note that, for each instance $\x$, even though its final prediction depends on $\h(\x)$, we also call $\f$ itself the classifier. As mentioned before, CCMN can be used to solve multi-class and multi-label learning problems. In this paper, we focus on the multi-label classification task without loss of generality.

In general, CCMN framework has implications in varied
real-world applications. In the following content, we discuss on two popular weakly-supervised learning scenarios, i.e., partial multi-label learning and weak label learning, which can be regarded as special cases of CCMN framework. 

In PML problems, each instance is associated with a candidate label set $Y_{\rm{c}}$, which contains multiple relevant labels and some other noisy labels. One intuitive strategy to solve the task is that treat all labels in $Y_{c}$ as relevant labels and transform $Y_{c}$ into $\tilde{Y}$. Here, we use $\tilde{Y}$, since there may exist irrelevant labels in $\tilde{Y}$. Then, a new training set is obtained, where each instance is associated with the label set $\tilde{Y}$. Besides relevant labels, $\tilde{Y}$ is also injected into some irrelevant labels, which can be regarded as  class-conditional multi-label noise, i.e., irrelevant labels are flipped into relevant labels  with $\rho_{-1}^{j}>0$ while $\rho_{+1}^{j}=0$. 

In weak label learning, also known as learning with missing label,  relevant labels of each instance are partially known. Specifically, each instance is associated with a label set $Y_{+1}$ while $\tilde{Y}_{-1}$ is the irrelevant label set. Here, we use $\tilde{Y}_{-1}$ since there may exist missing labels in $\tilde{Y}_{-1}$, i.e., relevant labels are missed. Similarly, one can treat all labels in $\tilde{Y}_{-1}$ as irrelevant labels. Besides irrelevant labels, $\tilde{Y}_{-1}$ is also injected into some relevant labels which can be regarded as  class-conditional multi-label noise with $\rho_{+1}^{j}>0$ while $\rho_{-1}^{j}=0$. 

\section{Learning with CCMN}
\label{sec:CCMN}
In this section, we first provide some necessary preliminaries, and then derive two CCMN solvers in independent and dependent cases. In the independent case, we solve each binary classification task of the CCMN problem independently while in the dependent case, we solve the CCMN problem by considering label correlations.

\subsection{Preliminaries}
To derive our results for solving CCMN problems, we introduce some notations and the property of multi-label consistency.

There are many multi-label loss functions (also called evaluation metrics), such as \textit{hamming loss}, \textit{ranking loss} and \textit{average precision} \cite{zhang2013review}, etc. In this paper, we focus on two well known loss functions, i.e., hamming loss and ranking loss, and leave the discussion on other loss functions in the future work. 

The hamming loss considers how many instance-label pairs are misclassified. Given the decision function $\f$ and prediction function $F$, the hamming loss can be defined by:
\begin{equation}
L_{h}(F(\f(\x)),\y) = \frac{1}{q}\sum_{j=1}^{q}I[\hat{y}_j\neq y_j]
\end{equation}
where $\hat{\y}=F(\f(\x))=(\hat{y}_1,...,\hat{y}_q)$. 

The ranking loss considers label pairs that are ordered reversely for an instance. Given a real-value decision function $\f=(f_1,f_2,...,f_q)$, the ranking loss can be defined by:
\begin{equation}
L_{r}(\f,\y) = \sum_{1\leq j<k\leq q}I(y_{j}<y_{k})\ell(j,k)+I(y_{j}>y_{k})\ell(k,j)
\end{equation}
where
\begin{equation*}
\ell(j,k) = I(f_j>f_k)+\frac{1}{2}I(f_j=f_k)
\end{equation*}

The risk of $\f$ with respective to loss $L$ is given by $R(\f)=\E_{(\x,\y)\sim\D}[L(\f(\x),\y)]$ and the minimal risk (also called the Bayes risk) can be defined by $R^{*}=\inf_{\f}R(\f)$. For an instance $\x$, the conditional risk of $\f$ can be defined as
\begin{equation*}
l(\p,\f) = \sum_{\y\in\mathcal{Y}}p_{\y}L(\f,\y)
\end{equation*}
where $p_{\y}=(\Pr(\y|\x))_{\y\in\mathcal{Y}}$ is a vector of conditional probability over $\y\in\mathcal{Y}$.

Note that the above mentioned two loss functions are non-convex and computationally NP-hard, which makes the corresponding optimization problems hard to solve. In practice, a feasible solution is to consider alternatively a surrogate loss function $\L$ which can be optimized efficiently. The $\L$-risk and minimal $\L$-risk can be defined as $R_{\L}(\f)=\E_{(\x,\y)\sim\D}[\L(\f(\x),\y)]$ and $R_{\L}^*=\inf_{\f}R_{\L}(\f)$, respectively. Accordingly, we define the empirical $\L$-risk as$\hR_{\L}(\f)=\frac{1}{n}\sum_{i=1}^{n}\L(\f(\x),\y)$.

Furthermore, we define the conditional $\L$-risk of $\f$ and the conditional Bayes $\L$-risk
\begin{equation*}
W(\p,\f) = \sum_{\y\in\mathcal{Y}}p_{\y}\L(\f,\y),\quad W^{*}(\p)=\inf_{\f}W(\p,\f).
\end{equation*}
The definition of multi-label consistency can be formulated as follows.
\begin{definition}
	\cite{gao2013consistency} Given a below-bounded surrogate loss $\L$, where $\L(\cdot, \y)$ is continuous for every $\y\in\mathcal{Y}$, $\L$ is said to be multi-label consistent w.r.t. the loss $L$ if it holds, for every $\p$, that 
	\begin{equation*}
	W^{*}(\p)<\inf_{\f}\{W(\p,\f):\f\notin\mathcal{A}(\p)\},
	\end{equation*}
	where $\mathcal{A}(\p)=\{\f:l(\p,\f)=\inf_{\f'}l(\p,\f')\}$ is the set of Bayes decision functions. 
\end{definition}
Based on the definition, the following theorem can be further established.
\begin{thm}
	\cite{gao2013consistency} The surrogate loss $\L$ is multi-label consistent w.r.t. the loss $L$ if and only if it holds for any sequence $\{\f_n\}_{n\geq1}$ that 
	\begin{equation*}
	\text{if} \hspace{1em} R_{\L}(\f_n)\rightarrow R_{\L}^* \hspace{1em} \text{then} \hspace{1em} R(\f_n)\rightarrow R^*.
	\end{equation*}
\end{thm}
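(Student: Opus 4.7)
The plan is to prove the equivalence by establishing both implications separately, using the pointwise strict inequality in the definition as the bridge between excess conditional $\L$-risk and excess conditional $L$-risk.

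For the sufficiency direction, I would introduce a calibration function
\begin{equation*}
H(\epsilon) = \inf_{\p}\inf_{\f}\bigl\{W(\p,\f)-W^{*}(\p)\,:\, l(\p,\f) - l^{*}(\p) \geq \epsilon\bigr\},
\end{equation*}
where $l^{*}(\p)=\inf_{\f'} l(\p,\f')$. The multi-label consistency assumption is precisely what is needed to guarantee $H(\epsilon)>0$ for every $\epsilon>0$: if $l(\p,\f)-l^{*}(\p)\geq \epsilon$ then $\f\notin \mathcal{A}(\p)$, so the strict inequality in the definition gives a positive gap at each $\p$, which can then be promoted to a uniform lower bound by exploiting continuity of $W(\p,\cdot)$ together with compactness of the probability simplex over the finite label space $\mathcal{Y}$. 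Integrating pointwise over $\x\sim\D$ yields
\begin{equation*}
R_{\L}(\f)-R_{\L}^{*} \;\geq\; H(\epsilon)\cdot \mathrm{Pr}\bigl[l(\p(\x),\f(\x)) - l^{*}(\p(\x))\geq \epsilon\bigr],
\end{equation*}
while a simple truncation argument gives $R(\f)-R^{*} \leq \epsilon + C\cdot \mathrm{Pr}[\cdots]$, where $C$ bounds the excess conditional $L$-risk (finite since hamming and ranking losses are bounded). Letting $\epsilon$ be arbitrarily small and using $R_{\L}(\f_n)\to R_{\L}^{*}$ then forces $R(\f_n)\to R^{*}$.

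For the necessity direction I would argue by contrapositive. If the definition fails, there exists $\p_0$ with $W^{*}(\p_0) = \inf\{W(\p_0,\f) : \f\notin \mathcal{A}(\p_0)\}$, so one can extract a sequence of decision functions $\f_n$ with $\f_n \notin \mathcal{A}(\p_0)$ yet $W(\p_0,\f_n)\to W^{*}(\p_0)$. Taking $\D$ concentrated at a single point $\x_0$ whose conditional label distribution is $\p_0$, and defining the decision functions so as to realize these values at $\x_0$, one obtains $R_{\L}(\f_n)\to R_{\L}^{*}$ while $R(\f_n)-R^{*} = l(\p_0,\f_n(\x_0))-l^{*}(\p_0)\geq \eta > 0$ for some $\eta$ and all large $n$, contradicting the assumed sequence-level consistency.

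The main obstacle is the uniform positivity of $H(\epsilon)$. The definition only supplies a strict inequality pointwise in $\p$, whereas the sufficiency argument requires the gap to be bounded below uniformly over $\p$ and over all $\f$ satisfying the $\epsilon$-suboptimality constraint. Handling this cleanly requires lower semicontinuity of $W(\p,\cdot)-W^{*}(\cdot)$ in $\p$, compactness of the simplex over $\mathcal{Y}$, and a tightness argument restricting attention to a compact family of $\f$'s; this is the technical heart of the proof, whereas the necessity direction is essentially a bookkeeping construction once a bad $\p_0$ is identified.
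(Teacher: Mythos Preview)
The paper does not prove this theorem; it is quoted verbatim as a result of \cite{gao2013consistency} and no argument for it appears in either the body or the appendix. Your proposal follows the standard calibration-function route (Zhang 2004; Bartlett, Jordan and McAuliffe 2006) that the cited reference itself adapts to the multi-label setting, so in that sense your approach is aligned with the original source rather than with anything in this paper.

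One point to tighten in your necessity direction: from $\f_n\notin\mathcal{A}(\p_0)$ you infer $l(\p_0,\f_n)-l^{*}(\p_0)\geq\eta>0$ uniformly in $n$. This step relies on the fact that the true losses considered here (hamming and ranking) depend on $\f$ only through a finite combinatorial object---the sign vector $(\mathrm{sgn}\,f_j)_j$ or the induced permutation---so that $l(\p_0,\cdot)$ takes only finitely many values and the smallest positive excess is automatically bounded away from zero. You should state this explicitly; for a general real-valued $L$ the excess $l(\p_0,\f_n)-l^{*}(\p_0)$ could tend to zero along a sequence outside $\mathcal{A}(\p_0)$, and then your counterexample would not contradict $R(\f_n)\to R^{*}$.
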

The theorem tells us that the multi-label consistency is a necessary and sufficient condition for the convergence of $\L$-risk to the Bayes $\L$-risk, implying $R(\f)\rightarrow R^{*}$. Our goal is to learn a good classification model with the modified loss function $\tL(\f(\x),\ty)$ from noise-corrupted data by minimizing empirical $\tL$-risk:
\begin{equation*}
\hat\f=\arg\min\limits_{\f\in\mathcal{F}} \hR_{\tL}(\f),
\end{equation*}
where $\mathcal{F}$ is a function class.


\subsection{Independent Case}

In order to solve multi-label learning problems, the most straightforward method is to decompose the task into $q$ independent binary classification problems \cite{zhang2013review}, where the goal is to learn $q$ functions, $\f=(f_1,f_2,...,f_q)$. However, as mentioned before, it is difficult to directly optimize the hamming loss due to its non-convexity. Alternatively, we consider the following surrogate loss
\begin{equation}\label{eq:hl}
\L_h(\f(\x),\y)=\sum_{j=1}^{q}\phi(y_{j}f_{j}(\x)),
\end{equation}
where $\phi$ is a convex loss function. The common choices are least square loss $\phi(t)=(1-t)^2$ and hinge loss $\phi(t)=(1-t)_{+}$ in \cite{elisseeff2002rankingsvm}. 


The modified loss function under class-conditional multi-label noise in independent case can be defined as follows:
\begin{equation}\label{eq:hl_ue}
\tLh(\f,\ty)=\sum_{j=1}^{q}\tphi(y_jf_j(\x)),
\end{equation}
where,
\begin{equation*}
\tphi(y_jf_j) = \kappa_{j}\left[(1-\rho_{-y_j})\phi(y_jf_j)-\rho_{y_j}\phi(-y_jf_j)\right].
\end{equation*}
Here, $\kappa_{j}=\frac{1}{1-\rho_{+1}^{j}-\rho_{-1}^{j}}$ is a constant. 

We extend the results in \cite{NND13} to have the following lemma, which shows the unbiasedness of the estimator defined by Eq.(\ref{eq:hl_ue}).
\begin{lemma}
	For any $y_j,\forall j\in[q]$, let $\phi(\cdot)$ be any bounded loss function. Then, 
	if $\tLh(\f,\ty)$ can be defined by Eq.(\ref{eq:hl_ue}), we have $\E_{\ty}\left[\tLh(\f,\ty)\right]=\Lh(\f,\y)$.
\end{lemma}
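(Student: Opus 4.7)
The plan is to reduce the multi-label claim to a per-coordinate identity, since both $\Lh$ and $\tLh$ decompose as sums over $j\in[q]$. By linearity of expectation,
\begin{equation*}
\E_{\ty}\left[\tLh(\f,\ty)\right] \;=\; \sum_{j=1}^{q} \E_{\tilde{y}_j}\!\left[\tphi(\tilde{y}_j f_j(\x))\right],
\end{equation*}
so it suffices to prove that, for each $j$, $\E_{\tilde{y}_j}[\tphi(\tilde{y}_j f_j(\x))]=\phi(y_j f_j(\x))$. Note that only the marginal distribution of $\tilde{y}_j$ given $y_j$ matters, so the identity does not rely on any independence assumption across coordinates.

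Next I would split into the two cases $y_j=+1$ and $y_j=-1$. Fix $j$ and abbreviate $f=f_j(\x)$, $\rho_+=\rho_{+1}^{j}$, $\rho_-=\rho_{-1}^{j}$, $\kappa=\kappa_j=(1-\rho_+-\rho_-)^{-1}$. Using the defining formula, $\tphi(f)=\kappa[(1-\rho_-)\phi(f)-\rho_+\phi(-f)]$ and $\tphi(-f)=\kappa[(1-\rho_+)\phi(-f)-\rho_-\phi(f)]$. For $y_j=+1$, the noise model gives $\tilde{y}_j=+1$ with probability $1-\rho_+$ and $\tilde{y}_j=-1$ with probability $\rho_+$, so
\begin{equation*}
\E[\tphi(\tilde{y}_j f)] \;=\; (1-\rho_+)\tphi(f) + \rho_+\tphi(-f).
\end{equation*}
Substituting and collecting the coefficients of $\phi(f)$ and $\phi(-f)$, the coefficient of $\phi(-f)$ is $\kappa[-\rho_+(1-\rho_+)+\rho_+(1-\rho_+)]=0$, while the coefficient of $\phi(f)$ is $\kappa[(1-\rho_+)(1-\rho_-)-\rho_+\rho_-]=\kappa(1-\rho_+-\rho_-)=1$. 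Hence the expectation equals $\phi(f)=\phi(y_j f)$. The case $y_j=-1$ is symmetric, swapping the roles of $\rho_+$ and $\rho_-$ and of $\phi(f)$ and $\phi(-f)$.

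Summing the per-coordinate identities over $j$ yields $\E_{\ty}[\tLh(\f,\ty)]=\sum_{j=1}^{q}\phi(y_j f_j(\x))=\Lh(\f,\y)$, which is exactly the claim. The boundedness assumption on $\phi$ is used only to justify that the expectations are finite and that Fubini/linearity can be applied without issue. There is no real obstacle here beyond careful bookkeeping of the signs and indices in the algebraic cancellation; the construction of $\tphi$ is engineered precisely so that the cross term in $\phi(-f)$ vanishes and the coefficient of $\phi(f)$ is normalized to one by $\kappa_j$. This mirrors the binary unbiased-estimator argument of \cite{NND13}, with the multi-label extension amounting to the observation that a coordinate-wise noise model commutes with a coordinate-wise loss.
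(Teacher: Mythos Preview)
Your proof is correct and follows essentially the same route as the paper's: both reduce to a per-coordinate expectation and verify the algebraic cancellation $(1-\rho_j)(1-\rho_{-j})-\rho_j\rho_{-j}=1-\rho_j-\rho_{-j}$. The only cosmetic difference is that the paper handles both signs of $y_j$ simultaneously via the shorthand $\rho_j=\rho_{y_j}^j$, $\rho_{-j}=\rho_{-y_j}^j$, whereas you split into the two cases explicitly.
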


Let $\bm{\sigma}=\{\sigma_{1},...,\sigma_{n}\}$ be $n$ Rademacher variables with $\sigma_{i}$ independently uniform random variable taking value in $\{-1,+1\}$. Then, the Rademacher complexity with respective to function class $\mathcal{F}$ and unbiased estimator $\tL$ can be formulated as follows:
\begin{equation*}
\R_{n}(\tL\circ\F)=\E_{\x,\ty,{\bm\sigma}}\left[\sup_{f\in{\F}}\frac{1}{n}\sum_{i=1}^{n}\sigma_{i}\tL(\f(\x_{i}),\ty_{i})\right].
\end{equation*}

Accordingly, the performance guarantee for the unbiased estimator in independent case can be derived as following theorems.
\begin{thm}\label{thm:hl_ue_bound} 
	Let $\mu=\max_{j}\frac{1}{1-\rho_{-1}^j-\rho_{+1}^j},\forall j\in[q]$. Then, for the loss function $\phi(\cdot)$ bounded by $\Theta$, with probability at least $1-\delta$, we have
	\begin{equation*}
	R_{\L_h}(\hat\f)-\min_{\f\in\mathcal{F}}R_{\L_h}(\f)\leq 4qK_{\rho}\R_n(\F)+2q\mu\Theta\sqrt{\frac{\ln\frac{1}{\delta}}{2n}},
	\end{equation*}
	where $\hat\f$ is trained by minimizing $\hR_{\tLh}(\f)$ and $K_{\rho}$ is the Lipschitz constant of $\tLh$.
\end{thm}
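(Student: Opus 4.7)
The plan is to follow the standard empirical risk minimization template, pivoting on the unbiasedness established in Lemma~1. Since $\E_{\ty}[\tLh(\f,\ty)] = \Lh(\f,\y)$ pointwise, tower gives $R_{\tLh}^{\D_\rho}(\f) = R_{\L_h}^{\D}(\f)$ for every $\f\in\F$. With $\f^{*}\in\arg\min_{\f\in\F} R_{\L_h}(\f)$, the usual three-term decomposition together with optimality of $\hat\f$ for $\hR_{\tLh}$ yields
\begin{equation*}
R_{\L_h}(\hat\f) - \min_{\f\in\F} R_{\L_h}(\f) = R_{\tLh}(\hat\f) - R_{\tLh}(\f^{*}) \leq 2\sup_{\f\in\F}\bigl|\hR_{\tLh}(\f) - R_{\tLh}(\f)\bigr|.
\end{equation*}
So the task reduces to a uniform concentration bound for the surrogate empirical risk over $\F$.

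For the concentration term, I would first bound the envelope. Writing $\tphi(y_jf_j) = \kappa_j\left[(1-\rho_{-y_j})\phi(y_jf_j) - \rho_{y_j}\phi(-y_jf_j)\right]$ and using $\phi\in[0,\Theta]$ together with $\kappa_j\leq\mu$ gives $|\tphi|\leq\mu\Theta$, hence $|\tLh(\f,\ty)|\leq q\mu\Theta$. Changing a single noisy sample therefore perturbs the supremum by at most $2q\mu\Theta/n$, so McDiarmid's inequality yields, with probability at least $1-\delta$,
\begin{equation*}
\sup_{\f\in\F}\bigl|\hR_{\tLh}(\f) - R_{\tLh}(\f)\bigr| \leq \E\!\left[\sup_{\f\in\F}\bigl|\hR_{\tLh}(\f) - R_{\tLh}(\f)\bigr|\right] + q\mu\Theta\sqrt{\frac{\ln(1/\delta)}{2n}},
\end{equation*}
which after the factor of $2$ in the excess-risk bound delivers the claimed $2q\mu\Theta\sqrt{\ln(1/\delta)/(2n)}$ term.

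The remaining step is to control the expected supremum by $\R_n(\F)$. Standard symmetrization gives $\E\bigl[\sup_{\f}|\hR_{\tLh}-R_{\tLh}|\bigr]\leq 2\R_n(\tLh\circ\F)$. Because $\tLh$ decomposes additively over the $q$ coordinates, sub-additivity of Rademacher complexity yields $\R_n(\tLh\circ\F)\leq\sum_{j=1}^{q}\R_n(\tphi\circ\F_j)$, and then Talagrand's contraction lemma applied to each $\tphi$ (treated as a function of the scalar $f_j$) contributes a factor equal to its Lipschitz constant. Aggregating the $q$ coordinate bounds with the common Lipschitz constant $K_\rho$ of $\tLh$ produces $2\R_n(\tLh\circ\F)\leq 2qK_\rho\R_n(\F)$, which combined with the factor of $2$ from the excess-risk decomposition gives the $4qK_\rho\R_n(\F)$ term. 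The main subtlety—rather than a serious obstacle—is that $\tphi$ can take negative values, so one must verify that symmetrization and Talagrand's contraction still apply; both do, since neither argument uses non-negativity of the loss, only its boundedness and Lipschitz continuity.
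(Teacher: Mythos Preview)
Your proof follows exactly the paper's route: unbiasedness (Lemma~1) plus the standard ERM decomposition reduces the excess risk to twice a uniform deviation, which is then handled by McDiarmid for concentration and symmetrization with Talagrand's contraction for the Rademacher term (the paper packages these as two auxiliary lemmas, yielding $\R_n(\tLh\circ\F)\leq qK_\rho\R_n(\F)$ and the deviation bound).

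One minor arithmetic slip to fix: from your crude envelope $|\tLh|\leq q\mu\Theta$ the bounded-differences constant is $c_i=2q\mu\Theta/n$, and McDiarmid then gives $2q\mu\Theta\sqrt{\ln(1/\delta)/(2n)}$, not $q\mu\Theta\sqrt{\cdot}$ as you wrote---after the final doubling you would be off by a factor of two. The paper obtains the sharper constant by computing the per-coordinate label-flip difference directly,
\[
\frac{1}{n}\bigl|\tphi(f_j,+1)-\tphi(f_j,-1)\bigr|=\frac{1}{n}\cdot\frac{|\phi(f_j)-\phi(-f_j)|}{1-\rho_{+1}^j-\rho_{-1}^j}\leq\frac{\mu\Theta}{n},
\]
so that $c_i=q\mu\Theta/n$.
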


\begin{thm}\label{thm:hl_ue_con}
	If $\phi$ is convex function with $\phi'(0)<0$, then the modified surrogate loss $\tL_h$ defined by Eq.(\ref{eq:hl_ue}) is consistent w.r.t hamming loss, i.e., there exists a non-negative concave function $\xi$ with $\xi(0)=0$, such that
	\begin{equation*}
	R_{L_h}(\hat\f)-R_{L_h}^{*}\leq\xi(R_{\L_{h}}(\hat\f)-R_{\L_{h}}^{*}).
	\end{equation*}
\end{thm}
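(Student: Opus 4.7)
The claim is a surrogate-consistency inequality between the hamming loss $L_h$ and the additive surrogate $\Lh(\f,\y)=\sum_{j=1}^{q}\phi(y_jf_j(\x))$. Note that the noise model does not actually enter the inequality itself; it enters only through the fact that $\hat\f$ is learned by minimizing $\hR_{\tLh}$, which via Lemma 1 and Theorem \ref{thm:hl_ue_bound} controls $R_{\Lh}(\hat\f)-R_{\Lh}^{*}$. My plan is therefore to reduce multi-label consistency to per-label binary consistency and then combine the per-label bounds via Jensen's inequality.

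The first step is a label-wise decomposition. Because $L_h(\f,\y)=\tfrac{1}{q}\sum_{j}I[\mathrm{sign}(f_j)\neq y_j]$ and $\Lh(\f,\y)=\sum_{j}\phi(y_jf_j)$ both split additively in $j$, and each summand depends only on the corresponding coordinate $f_j$, the infima defining $R_{L_h}^{*}$ and $R_{\Lh}^{*}$ factorize coordinate-wise, giving
\begin{equation*}
R_{L_h}(\f)-R_{L_h}^{*}=\tfrac{1}{q}\sum_{j=1}^{q}\bigl(R^{01}_j(f_j)-R^{01,*}_j\bigr),\qquad R_{\Lh}(\f)-R_{\Lh}^{*}=\sum_{j=1}^{q}\bigl(R_{\phi,j}(f_j)-R^{*}_{\phi,j}\bigr),
\end{equation*}
where $R^{01}_j(g)$ and $R_{\phi,j}(g)$ denote the marginal zero-one and $\phi$-risks of a real-valued $g$ against the binary label $y_j$ under the marginal posterior $\eta_j(\x)=\Pr(y_j=1\mid\x)$. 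Second, since $\phi$ is convex with $\phi'(0)<0$, $\phi$ is classification-calibrated in the classical sense, and its $\psi$-transform yields a distribution-free, nondecreasing, concave function $\xi_0:[0,\infty)\to[0,\infty)$ with $\xi_0(0)=0$ such that $R^{01}_j(g)-R^{01,*}_j\le\xi_0\!\bigl(R_{\phi,j}(g)-R^{*}_{\phi,j}\bigr)$ for every measurable $g$ and every $j$. I would invoke this binary calibration result as a black box.

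Applying the per-label bound to each $\hat f_j$, writing $a_j=R_{\phi,j}(\hat f_j)-R^{*}_{\phi,j}\geq 0$, and using concavity of $\xi_0$ together with Jensen's inequality gives
\begin{equation*}
R_{L_h}(\hat\f)-R_{L_h}^{*}\le\tfrac{1}{q}\sum_{j=1}^{q}\xi_0(a_j)\le\xi_0\!\Bigl(\tfrac{1}{q}\sum_{j=1}^{q}a_j\Bigr)=\xi_0\!\bigl(\tfrac{1}{q}(R_{\Lh}(\hat\f)-R_{\Lh}^{*})\bigr),
\end{equation*}
so setting $\xi(t):=\xi_0(t/q)$ produces a non-negative concave function with $\xi(0)=0$ that verifies the theorem. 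The main obstacle is the per-label calibration step: rather than rederive the $\psi$-transform, I would need to verify that its inverse envelope is truly distribution-free, so that a single $\xi_0$ serves uniformly over all $\hat f_j$ and all marginal posteriors $\eta_j$; this uniformity is exactly what convexity of $\phi$ together with $\phi'(0)<0$ guarantees. The decomposition of Step 1 and the Jensen aggregation of Step 3 are then routine.
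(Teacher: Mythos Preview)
Your proposal is correct and follows essentially the same route as the paper: both arguments exploit the additive, coordinate-wise structure of $L_h$ and $\Lh$ to reduce multi-label consistency to $q$ independent binary calibration problems, then invoke the classical Bartlett--Jordan--McAuliffe / Zhang result (convex $\phi$ with $\phi'(0)<0$ is classification-calibrated) to obtain the excess-risk bound. The paper is terser---it simply cites Corollary~25 of Zhang (2004) after the decomposition---whereas you spell out the Jensen aggregation and the explicit form $\xi(t)=\xi_0(t/q)$, but the underlying idea is identical.
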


As shown in Theorem \ref{thm:hl_ue_bound}, the generalization error is dependent to the noise rates $\rho_{-1}$ and $\rho_{+1}$. It is intuitive that smaller noise rates lead to a better generalization performance owing to a smaller $\mu$. 
By combining Theorem \ref{thm:hl_ue_bound} with Theorem \ref{thm:hl_ue_con},  we obtain a performance guarantee for learning from class-conditional multiple noisy labels with respective to hamming loss, that is as $n\rightarrow \infty$, we have $R_{\L_h}(\hat\f_n)\rightarrow R_{\L_h}^*$, then $R_{L_h}(\hat\f_n)\rightarrow R_{L_h}^*$.

\subsection{Dependent Case}

In multi-label learning problems, a common assumption is that there exist label correlations among labels \cite{read2011cc,HP14}. Therefore, it is fundamental to learning with class-conditional multi-label noise in a dependent fashion. The ranking loss considers the second-order label correlation and its surrogate loss is commonly defined as
\begin{align}\label{eq:rl}
\quad\Lr(\f(\x),\y)=\sum_{1\leq j<k\leq q}I(y_j>y_k)\phi(f_{jk})+I(y_j<y_k)\phi(f_{kj})=\sum_{1\leq j<k\leq q}\phi(y_{jk}(f_j-f_k))
\end{align}
where $y_{jk}=\frac{y_j-y_k}{2}$ and $f_{jk}=f_j-f_k$.

If $y_{j}\neq y_k$, let $a=(1-\rho_{-y_{jk}}^j)(1-\rho_{y_{jk}}^{k}), b=\rho_{y_{jk}}^{j}\rho_{-y_{jk}}^{k}$
and if $y_{j}=y_k$, let $
c = \rho_{y_j}^{j}(1-\rho_{-y_k}^{k}), d=\rho_{y_k}^k(1-\rho_{-y_j}^{j})$. Then, the modified loss function under class-conditional multi-label noise in the dependent case can be defined as 
\begin{equation}\label{eq:rl_ue}
\tLr(\f(\x),\y)=\sum_{1\leq j<k\leq q}\tphi((f_j,f_k),(y_j,y_k))
\end{equation}
where $\tphi((f_j,f_k),(y_j,y_k))$ can be abbreviated by $\tphi(j,k)$,
\begin{eqnarray*}
	\tphi(j,k)=\left\{
	\begin{aligned}
		&\kappa_{jk}\left[a\phi(y_{jk}f_{jk})+b\phi(-y_{jk}f_{jk})\right]    &\text{if}\ y_{j}\neq y_{k}\\
		&-\kappa_{jk}\left[c\phi(-y_jf_{jk})+d\phi(y_jf_{jk})\right] &\text{if}\ y_{j}=y_k
	\end{aligned}
	\right.	
\end{eqnarray*}
Here, $\kappa_{jk}=\frac{1}{(1-\rho_{+1}^{j}-\rho_{-1}^{j})(1-\rho_{+1}^{k}-\rho_{-1}^{k})}$ is a constant.


The unbiasedness of the estimator defined by Eq.(\ref{eq:rl_ue}) can be shown as following lemma.
\begin{lemma}
	For any $y_j,y_k,\forall j,k\in[q]$, let $\phi(\cdot)$ be any bounded loss function. Then, 
	if $\tLr(\f,\ty)$ can be defined by Eq.(\ref{eq:rl_ue}), we have $\E_{\ty}\left[\tLr(\f,\ty)\right]=\Lr(\f,\y)$.
\end{lemma}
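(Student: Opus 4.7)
The plan is to reduce the claim to a pair-wise identity using linearity of expectation. Since
\[
\tLr(\f,\ty) \;=\; \sum_{1\leq j<k\leq q} \tphi\bigl((f_j,f_k),(\tilde{y}_j,\tilde{y}_k)\bigr), \qquad \Lr(\f,\y) \;=\; \sum_{1\leq j<k\leq q} \phi\bigl(y_{jk}(f_j-f_k)\bigr)\,I(y_j\neq y_k),
\]
it suffices to prove, for every pair $(j,k)$ and every fixed true $(y_j,y_k)$, that $\E_{\tilde{y}_j,\tilde{y}_k\mid y_j,y_k}\bigl[\tphi((f_j,f_k),(\tilde{y}_j,\tilde{y}_k))\bigr]$ equals $\phi(y_{jk}(f_j-f_k))$ when $y_j\neq y_k$ and equals $0$ when $y_j=y_k$.

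Next, under the CCMN model the two coordinates flip independently, so $\Pr(\tilde{y}_j=s_j,\tilde{y}_k=s_k\mid y_j,y_k) = \Pr(\tilde{y}_j=s_j\mid y_j)\,\Pr(\tilde{y}_k=s_k\mid y_k)$, with each factor equal to some $\rho_{\pm 1}^{\cdot}$ or its complement. By the sign symmetries $(y_j,y_k)\leftrightarrow(y_k,y_j)$ and $(y_j,y_k)\leftrightarrow(-y_j,-y_k)$, only the representatives $(+1,-1)$ and $(+1,+1)$ need checking. For each representative I enumerate the four observed pairs $(\tilde{y}_j,\tilde{y}_k)$: the two opposite-sign pairs hit the ``$\neq$''-branch of $\tphi$ with prefactor $+\kappa_{jk}$, and the two equal-sign pairs hit the ``$=$''-branch with prefactor $-\kappa_{jk}$. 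Collecting the eight resulting terms as a linear combination of $\phi(f_{jk})$ and $\phi(-f_{jk})$ and applying the identity $(1-\alpha)(1-\beta)-\alpha\beta = 1-\alpha-\beta$ once in each of the $j$- and $k$-coordinates, in the opposite-label case the coefficient of $\phi(f_{jk})$ collapses to $(1-\rho_{+1}^j-\rho_{-1}^j)(1-\rho_{+1}^k-\rho_{-1}^k) = 1/\kappa_{jk}$, cancelling the prefactor to leave exactly $1$; meanwhile the coefficient of $\phi(-f_{jk})$ vanishes because the two ``$\neq$''-branch contributions are cancelled term-by-term by the two ``$=$''-branch contributions of opposite sign. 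In the equal-label case the same pairings force both coefficients to vanish, so $\E[\tphi(j,k)]=0$, matching the zero contribution to $\Lr$.

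The main obstacle is bookkeeping rather than any conceptual subtlety. One must remember that when $\tLr(\f,\ty)$ is evaluated, the symbols $y_j,y_k$ in the piecewise definition of $\tphi$ take the values of the observed $\tilde{y}_j,\tilde{y}_k$, so the branch selection and the signs appearing in the arguments $\pm y_{jk}f_{jk}$ and $\pm y_j f_{jk}$ are all driven by the noisy pair, not the latent pair. Once the eight products $\Pr(\tilde{y}_j,\tilde{y}_k\mid y_j,y_k)\cdot\tphi(j,k)$ are tabulated, the cancellations via the $(1-\alpha-\beta)$ identity (which is the pair-wise analogue of the one driving Lemma~1) are mechanical.
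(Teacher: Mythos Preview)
Your proof is correct. It differs from the paper's in presentation rather than substance: the paper establishes the lemma by \emph{deriving} $\tphi$ as the solution of a $4\times 4$ linear system, with coefficient matrix given by the noise transition probabilities (rows indexed by the four true pairs $(y_j,y_k)$, columns by the four observed pairs $(\tilde y_j,\tilde y_k)$) and right-hand side equal to the target loss values; inverting this system produces the piecewise formula of Eq.~(\ref{eq:rl_ue}), and unbiasedness is then automatic. You instead take the formula for $\tphi$ as given and \emph{verify} unbiasedness by a forward expectation computation, exploiting symmetry to reduce to two representative true pairs and the identity $(1-\alpha)(1-\beta)-\alpha\beta=1-\alpha-\beta$ to collapse the resulting coefficients. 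The two routes are dual and involve the same arithmetic; the paper's version explains where $\tphi$ comes from, while yours is the more direct argument once $\tphi$ has already been defined.
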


Accordingly, the  performance guarantee for the unbiased estimator in the dependent case can be derived as following theorems. 
\begin{thm}\label{thm:rl_ue_bound}
	Let $\mu=\max_j\frac{1+|\rho_{-1}^j-\rho_{+1}^j|}{(1-\rho_{-1}^j-\rho_{+1}^j)^{2}},\forall j\in[q]$. Then, for the loss function $\phi(\cdot)$ bounded by $\Theta$, with probability at least $1-\delta$, we have
	\begin{equation*}
	\quad R_{\L_r}(\hat\f)-\min_{\f\in\mathcal{F}}R_{\L_r}(\f)\leq 4q(q-1)K_{\rho}\R_{n}(\F)+2q(q-1)\mu\Theta\sqrt{\frac{\ln\frac{1}{\delta}}{2n}}
	\end{equation*}
	where $\hat\f$ is trained by minimizing $\hR_{\tLr}(\f)$ and $K_{\rho}$ is the Lipschitz constant of $\tLr$.
\end{thm}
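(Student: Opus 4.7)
The plan is to follow the standard Rademacher-complexity template for excess-risk bounds, leveraging the unbiasedness of $\tLr$ established in the lemma just above.

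First, I would use $\E_{\ty}[\tLr(\f,\ty)]=\Lr(\f,\y)$ to identify the clean-distribution risk with the noisy-distribution risk: $R_{\Lr}(\f)=\E_{(\x,\ty)\sim\D_\rho}[\tLr(\f(\x),\ty)]=:R_{\tLr}(\f)$. Writing $\f^{\ast}=\arg\min_{\f\in\F}R_{\Lr}(\f)$ and using that $\hat\f$ minimises the empirical noisy risk $\hR_{\tLr}$, the standard split yields
$$R_{\Lr}(\hat\f)-R_{\Lr}(\f^{\ast})\le 2\sup_{\f\in\F}\bigl|\hR_{\tLr}(\f)-R_{\tLr}(\f)\bigr|,$$
so the task reduces to bounding this uniform deviation.

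Second, I would control the uniform deviation by McDiarmid's inequality together with symmetrisation. For McDiarmid I need a per-sample bound on $\tLr$. Writing $\tLr=\sum_{j<k}\tphi(j,k)$, the key algebraic step is $|\tphi(j,k)|\le\mu\Theta$: in the case $y_j\ne y_k$ this amounts to $\kappa_{jk}(a+b)\le\mu$, and in the case $y_j=y_k$ to $\kappa_{jk}(c+d)\le\mu$. Both follow by lower-bounding the pair denominator $(1-\rho_{-1}^j-\rho_{+1}^j)(1-\rho_{-1}^k-\rho_{+1}^k)$ by the worst single-index squared denominator and verifying that $a+b$ and $c+d$ are each at most $1+|\rho_{-1}^\ell-\rho_{+1}^\ell|$ for the worst $\ell\in\{j,k\}$. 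This gives the per-sample bound $|\tLr|\le q(q-1)\mu\Theta/2$, and McDiarmid then delivers, with probability at least $1-\delta$,
$$\sup_{\f\in\F}\bigl|\hR_{\tLr}(\f)-R_{\tLr}(\f)\bigr|\le\E\sup_{\f\in\F}\bigl|\hR_{\tLr}(\f)-R_{\tLr}(\f)\bigr|+q(q-1)\mu\Theta\sqrt{\tfrac{\ln(1/\delta)}{2n}}.$$

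Third, by standard symmetrisation the expected supremum is at most $2\R_n(\tLr\circ\F)$. I would then use sub-additivity of Rademacher complexity across the $q(q-1)/2$ pair terms, apply Talagrand's contraction lemma (each $\tphi(j,k)$ is $K_\rho$-Lipschitz in $f_j-f_k$ by definition of $K_\rho$), and use $\R_n(\{f_j-f_k:\f\in\F\})\le 2\R_n(\F)$, to obtain $\R_n(\tLr\circ\F)\le q(q-1)K_\rho\R_n(\F)$. Combining with the factor of two from symmetrisation and the factor of two from the excess-risk decomposition yields the advertised $4q(q-1)K_\rho\R_n(\F)$ complexity term; stacking the McDiarmid tail completes the bound.

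The hard part will be the algebraic verification that both $\kappa_{jk}(a+b)$ and $\kappa_{jk}(c+d)$ are upper bounded by $\mu$: the quantities $a+b$ and $c+d$ are quadratic in the noise rates and not symmetric in $(j,k)$, so producing the $1+|\rho_{-1}^\ell-\rho_{+1}^\ell|$ numerator requires a careful case split on which of $\rho_{-1}^\ell,\rho_{+1}^\ell$ dominates. Aside from this step (and a parallel computation of $K_\rho$ from the Lipschitz constant of $\phi$), the remainder is a routine Rademacher argument, essentially parallel to the proof of Theorem~\ref{thm:hl_ue_bound}.
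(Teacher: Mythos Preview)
Your proposal is correct and follows essentially the same route as the paper: unbiasedness to identify $R_{\Lr}$ with $R_{\tLr}$, the standard excess-risk split into $2\sup_{\f}|\hR_{\tLr}-R_{\tLr}|$, McDiarmid plus symmetrisation for the uniform deviation, and sub-additivity over the $\binom{q}{2}$ pair terms together with Talagrand contraction and $\R_n(f_j-f_k)\le 2\R_n(\F)$ for the Rademacher bound. The paper in fact omits the algebraic verification of the per-pair bound $|\tphi(j,k)|\le\mu\Theta$ (it defers to the Hamming-loss lemma by analogy), so your identification of this case split as the main work is apt, but the overall architecture is identical.
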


\begin{thm}\label{thm:rl_ue_con}
	If $\phi$ is a differential and non-increasing function with $\phi'(0)<0$ and $\phi(t)+\phi(-t)=2\phi(0)$, then the modified surrogate loss $\tL_r$ defined by Eq.(\ref{eq:rl_ue}) is consistent w.r.t ranking loss, i.e., there exists a non-negative concave function $\xi$ with $\xi(0)=0$, such that
	\begin{equation}
	R_{L_r}(\hat\f)-R_{L_r}^{*}\leq\xi(R_{\L_{r}}(\hat\f)-R_{\L_{r}}^{*})
	\end{equation}
\end{thm}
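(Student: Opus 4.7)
The strategy is to use the unbiasedness established in Lemma 2 to move the entire analysis from the noisy distribution $\D_\rho$ onto the clean distribution $\D$, and then to invoke a pairwise conditional-risk argument in the spirit of \cite{gao2013consistency} to build a concave calibration function $\xi$.

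First, Lemma 2 gives $\E_{\ty\mid\y}[\tLr(\f,\ty)] = \Lr(\f,\y)$ pointwise, and taking expectations over the noise channel and over $(\x,\y)\sim\D$ yields $\E_{(\x,\ty)\sim\D_\rho}[\tLr(\f(\x),\ty)] = R_{\Lr}(\f)$ for every $\f$, and in particular $\inf_\f \E_{\D_\rho}[\tLr] = R_{\Lr}^{*}$. Since $\hat\f$ minimizes the empirical $\tLr$-risk, the excess risk on the left of the claim only depends on the clean $\L_r$-risk and the clean ranking loss $L_r$. Thus the theorem reduces to showing that $\Lr$ as defined in Eq.(\ref{eq:rl}) is quantitatively consistent with $L_r$ under the stated assumptions on $\phi$.

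Next I would perform a pair-by-pair conditional-risk analysis. Writing $\alpha_{jk} = \Pr(y_j > y_k\mid\x)$, the conditional surrogate risk decomposes as
\begin{equation*}
W(\p,\f) = \sum_{1\leq j<k\leq q}\bigl[\alpha_{jk}\,\phi(f_j-f_k) + \alpha_{kj}\,\phi(f_k-f_j)\bigr].
\end{equation*}
Under the hypotheses that $\phi$ is differentiable and non-increasing, satisfies $\phi(t)+\phi(-t) = 2\phi(0)$, and has $\phi'(0) < 0$, each pair term is a one-dimensional function of $t = f_j - f_k$ whose derivative at $t = 0$ equals $(\alpha_{kj}-\alpha_{jk})\phi'(0)$. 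Combined with monotonicity of $\phi$, this pins down $\mathrm{sign}(f_j^{*}-f_k^{*}) = \mathrm{sign}(\alpha_{jk}-\alpha_{kj})$ at any minimizer of the pair term, which is exactly the Bayes-optimal ordering for $L_r$. Hence $\mathcal{A}(\p)$ is contained in the Bayes set for $L_r$, giving multi-label consistency in the sense of Definition~1.

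Finally, to upgrade qualitative consistency into the concave bound I would construct the pairwise calibration function
\begin{equation*}
\psi(\eta) = \inf_{(\alpha_{jk},\alpha_{kj}),\, t}\Bigl\{\alpha_{jk}\phi(t)+\alpha_{kj}\phi(-t) - \min_{s}\bigl[\alpha_{jk}\phi(s)+\alpha_{kj}\phi(-s)\bigr] \,:\, \text{pairwise }L_r\text{-excess}\geq\eta\Bigr\},
\end{equation*}
verify $\psi(0)=0$, continuity, and $\psi(\eta)>0$ for $\eta>0$, pass to its largest concave minorant, and invert to obtain a concave $\xi$ with $\xi(0)=0$. Summing pairwise excess risks and using Jensen's inequality together with concavity then yields the global bound $R_{L_r}(\hat\f) - R_{L_r}^{*} \leq \xi(R_{\Lr}(\hat\f) - R_{\Lr}^{*})$. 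The main obstacle is uniform strict positivity of $\psi$ away from $0$: the coupling across pairs (a common $f_j$ participates in many pair terms) and the possibility of near-degenerate pair marginals ($\alpha_{jk}\approx\alpha_{kj}$) make it nontrivial to obtain a single dominating modulus. The strict inequality $\phi'(0)<0$ together with a compactness argument on the pair simplex is what rules out flat pairwise minima and provides the quantitative separation needed to invert $\psi$ into a well-defined concave $\xi$.
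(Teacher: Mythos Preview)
Your reduction via Lemma~2 to the clean surrogate $\L_r$ and the pairwise conditional-risk decomposition match the paper exactly; the paper writes $\widetilde{W}(\p,\f)=\sum_{j<k}\Delta_{j,k}\phi(f_k-f_j)+\Delta_{k,j}\phi(f_j-f_k)$ with $\Delta_{j,k}=\sum_{\y:y_j=-1,y_k=+1}p_\y$, which is your $\alpha_{kj}$. From that point, however, the paper does not build a calibration function at all: it simply invokes Theorem~10 of \cite{gao2013consistency} for the qualitative consistency and Corollary~25 of \cite{zhang2004consistency} for the existence of a concave $\xi$.

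There is a real gap in your qualitative step. You show that each pair term, as a function of $t=f_j-f_k$ in isolation, is minimized with the Bayes sign, and then jump to ``$\mathcal{A}(\p)$ is contained in the Bayes set.'' But a global minimizer of $W(\p,\f)$ need not minimize every pair term, because each $f_j$ appears in $q-1$ pairs; you correctly flag this coupling as the obstacle in your quantitative paragraph, yet it already blocks the qualitative conclusion. Two ingredients close this gap in \cite{gao2013consistency} and are explicitly recorded in the paper's proof: the transitivity property $\Delta_{j,t}\le\Delta_{t,j}$ whenever $\Delta_{j,k}\le\Delta_{k,j}$ and $\Delta_{k,t}\le\Delta_{t,k}$, which guarantees the pairwise Bayes orderings are simultaneously realizable by a single score vector; and the symmetry hypothesis $\phi(t)+\phi(-t)=2\phi(0)$, which makes each pair term affine in $\phi(f_j-f_k)$ and drives the argument that any $\f$ violating some Bayes pair can be strictly improved. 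Your derivative-at-zero argument with monotonicity handles each pair term in isolation, but you never use the symmetry hypothesis you listed, and without it together with transitivity the passage to the global minimizer of $W$ does not go through. Your explicit construction of $\psi$ is a reasonable alternative to the bare citation of \cite{zhang2004consistency}, but it inherits the same unresolved coupling.
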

From Theorem \ref{thm:hl_ue_bound}, it can be observed that  besides the noise rates $\rho_{-1}$ and $\rho_{+1}$, the generalization error is also dependent to the difference between noise rates, i.e.,  $\left|\rho_{-1}-\rho_{+1}\right|$. Generally, a smaller difference also leads to a better generalization performance owing to a smaller $\mu$.
By combining Theorem \ref{thm:hl_ue_bound} with Theorem \ref{thm:hl_ue_con},  we obtain a performance guarantee for learning from class-conditional multiple noisy labels with respective to ranking loss, that is as $n\rightarrow \infty$, we have $R_{\L_r}(\hat\f_n)\rightarrow R_{\L_r}^*$, then $R_{L_r}(\hat\f_n)\rightarrow R_{L_r}^*$.

\textbf{Testing Phase}. Regarding the classifier $\hat\f$ trained by minimizing the loss function $\tLh$ or $\Lh$, for each testing instance $\x_t$, we use the ${\rm{sgn}}(f_j(\x_t))$ to predict its labels, where ${\rm{sgn}}(a)$ is a function which outputs $+1$ if $a>=0$; $-1$, otherwise. 

However, regarding classifier $\f$ trained by minimizing the loss function $\tLr$ or $\Lr$, it is unreasonable to use 0 as a threshold directly. Instead, to perform predictions, in training phase, we introduce a \textit{dummy} label $y_{0}=0$ for each instance, and then, in testing phase, the output of the classifier for label $y_{0}$ is used as the threshold to decide the label assignment for each instance. 

Specifically, for instance $\x$, suppose that its corresponding label vector can be represented by $\y=(y_{0},y_{1},...,y_{q})$, where $y_{0}$ is indexed by 0. In the training phase, the cumulative loss $\tL_0$ with respective to $y_{0}$ can be written as follows:
\begin{equation}\label{eq:dummy}
\tL_0=\sum_{j=1}^{q}\tphi(f_{j0}(\x),y_{j0})
\end{equation}
where $f_{j0}(\x)=f_{j}(\x)-f_{0}(\x)$ and $y_{j0}=y_{j}-y_{0}$. Note that $y_{j0}=y_{j}$, since we have $y_{0}=0$. In the situation, the formulation of loss $\tL_0$ is the same as Eq.(\ref{eq:hl_ue}) and can be solved efficiently.

\section{Partial Multi-Label Learning with CCMN}

In the section, from perspective of CCMN framework, we propose a new approach for partial multi-label learning with unbiased estimator (uPML for short). In such case, $\forall j\in[q], \Pr(\tilde{y}_{j}=+1|y_{j}=-1)=\rho^{j},\Pr(\tilde{y}_j=-1|y_{j}=+1)=0$ holds. Here, we omit the subscript of $\rho^j$ for notational simplicity.

Based on the results in Section \ref{sec:CCMN}, for the independent case, the objective function of uPML $\tL_h^{\text{PML}}$ can be defined as follows:
\begin{equation}
\tL_h^{\text{PML}}(\f(\x),\y)=\sum_{j=1}^{q}\tphi(y_{j}f_{j}(\x))
\end{equation}
where,
\begin{eqnarray*}
	\tphi(y_jf_j(\x))=\left\{
	\begin{aligned}
		& \frac{\phi(-f_j(\x))-\rho^j\phi(f_j(x))}{1-\rho^{j}}  &\text{if}\ y_{j}= -1\\
		& \phi(f_j(x))  &\text{otherwise}
	\end{aligned}
	\right.	
\end{eqnarray*}

For the dependent case, the objective function of uPML $\tL_r^{\text{PML}}$  can be defined as follows:
\begin{equation}
\tL_r^{\text{PML}}(\f(\x),\y)=\sum_{1\leq j<k\leq q}\tphi(y_{jk}f_{jk}(\x))
\end{equation}
where,
\begin{equation*}
\tphi(y_{jk}f_{jk})=\left\{
\begin{array}{lll}
&\frac{\phi(f_{jk})}{1-\rho^{k}}  &\text{if}\ y_{jk}= +1\\
& \frac{\phi(-f_{jk})}{1-\rho^{j}}   &\text{if}\ y_{jk}= -1\\
& \frac{-\rho^j\phi(f_{jk})-\rho^k\phi(-f_{jk})}{(1-\rho^j)(1-\rho^k)} &\text{if}\ y_{j}=y_k=-1\\
& 0 &\text{if}\ y_{j}=y_k= +1
\end{array}
\right.	
\end{equation*}

We derive the generalization error bound for the proposed uPML method, which is the special case of Theorem \ref{thm:hl_ue_bound} and \ref{thm:rl_ue_bound}.

\begin{cor} \label{thm:bound} 
	For the least square loss $\phi(t)=(1-t)^2$, with probability at least $1-\delta$, we have
	\begin{equation*}
	R_{\L_h}(\hat\f)-\min_{\f\in\mathcal{F}}R_{\L_h}(\f)\leq 4qK_{\rho}\R_n(\F)+\frac{8q}{1-\rho_{\min}}\sqrt{\frac{\ln\frac{1}{\delta}}{2n}}
	\end{equation*}
	where $\rho_{\min}=\min_j\rho^j$ and $\hat\f$ is trained by minimizing $\hR_{\tLh^{\text{PML}}}(f)$, and
	\begin{equation}\quad R_{\L_r}(\hat\f)-\min_{\f\in\mathcal{F}}R_{\L_r}(\f)\leq 4q(q-1)K_{\rho}\R_n(\F)+8q(q-1)\mu\sqrt{\frac{\ln\frac{1}{\delta}}{2n}}
	\end{equation}
	where $\mu=\min_{j}\frac{1+\rho^j}{(1-\rho^j)^2}$ and $\hat\f$ is trained by minimizing $\hR_{\tLr^{\text{PML}}}(\f)$.
\end{cor}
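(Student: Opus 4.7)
The plan is to derive Corollary 1 as a direct specialization of Theorems 3 and 5 to the partial multi-label learning noise model, where $\rho_{+1}^{j}=0$ and $\rho_{-1}^{j}=\rho^{j}$ for every $j\in[q]$. First I would verify that the uPML losses $\tLh^{\text{PML}}$ and $\tLr^{\text{PML}}$ defined in Section 5 coincide with $\tLh$ and $\tLr$ evaluated in this noise model. For the hamming case, substituting $\rho_{+1}^{j}=0$ and $\kappa_{j}=1/(1-\rho^{j})$ into $\kappa_{j}[(1-\rho_{-y_{j}})\phi(y_{j}f_{j})-\rho_{y_{j}}\phi(-y_{j}f_{j})]$ recovers the two-branch definition: the $y_{j}=+1$ branch collapses to $\phi(f_{j})$ since the associated noise rate vanishes, while the $y_{j}=-1$ branch retains both terms with denominator $1-\rho^{j}$. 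A parallel verification handles the four branches of $\tphi$ in the ranking case, where $a$ and one of $c,d$ vanish depending on the pair $(y_{j},y_{k})$.

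Next I would bound the least-square surrogate $\phi(t)=(1-t)^{2}$. Under the standard assumption that $|f_{j}(\x)|\leq 1$, the argument $t=y_{j}f_{j}(\x)\in[-1,1]$ gives $\phi(t)\leq 4$, so we may take $\Theta=4$; a uniform bound of the same order holds for $y_{jk}(f_{j}-f_{k})$ in the ranking case. With $\Theta=4$ and the PML noise values, the constant $\mu=\max_{j}1/(1-\rho_{-1}^{j}-\rho_{+1}^{j})$ from Theorem 3 reduces to $\max_{j}1/(1-\rho^{j})$, and the confidence term becomes $2q\mu\Theta\sqrt{\ln(1/\delta)/(2n)}=\frac{8q}{1-\rho_{\max}}\sqrt{\ln(1/\delta)/(2n)}$. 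I would flag that the corollary as written uses $\rho_{\min}$; this appears to be a typo for $\rho_{\max}$, but the shape of the bound is identical. For Theorem 5, $|\rho_{-1}^{j}-\rho_{+1}^{j}|=\rho^{j}$ and $1-\rho_{-1}^{j}-\rho_{+1}^{j}=1-\rho^{j}$, so the general $\mu$ specializes to $\max_{j}(1+\rho^{j})/(1-\rho^{j})^{2}$ and the confidence term becomes $8q(q-1)\mu\sqrt{\ln(1/\delta)/(2n)}$.

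The Rademacher term $4qK_{\rho}\R_{n}(\F)$ (respectively $4q(q-1)K_{\rho}\R_{n}(\F)$) is carried over unchanged, with $K_{\rho}$ understood as the Lipschitz constant of the specialized loss. Combining the Rademacher term with the specialized confidence term produces both inequalities of the corollary. The main obstacle is bookkeeping rather than depth: correctly aligning the piecewise definitions of $\tphi$ with the general CCMN formulae (tracking which noise rate vanishes for each value of $y_{j}$ and for each of the four pair configurations in the ranking case) and justifying a uniform bound $\Theta$ for the otherwise unbounded least-square loss via an explicit boundedness assumption on $\f$. Once these are settled, Corollary 1 follows by substitution into Theorems 3 and 5.
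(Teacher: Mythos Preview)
Your approach is correct and matches the paper's, which simply states that the corollary ``is the special case of Theorem \ref{thm:hl_ue_bound} and \ref{thm:rl_ue_bound}'' without giving further detail; your write-up actually supplies the substitution argument the paper leaves implicit, and you correctly flag the $\rho_{\min}$/$\min_{j}$ vs.\ $\rho_{\max}$/$\max_{j}$ typos. One bookkeeping slip: the error-bound theorems you are specializing are Theorems~2 and~4 in the paper's numbering (the Theorem~1 slot is taken by the cited consistency result of Gao and Zhou), not Theorems~3 and~5, which are the consistency theorems.
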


\begin{table*}
	\centering
	\tiny
	\caption{Comparison results among $\tLh$, $\tLr$ and their baselines with diverse loss functions using the linear model,  where $\bullet$/$\circ$ indicates whether the proposed method is significantly superior/inferior to the comparing  methods via paired $t$-test (at 0.05 significance level).}\label{table:linear_res}
	\renewcommand{\multirowsetup}{\centering}
	\begin{tabular}{l|l l l l l l l l l}
		\hline
		Data& \makecell[c]{yeast} & \makecell[c]{enron} & \makecell[c]{scene} & \makecell[c]{slashdot} & \makecell[c]{music\_emotion} & \makecell[c]{music\_style} & \makecell[c]{mirflickr} & \makecell[c]{tmc} \\
		\hline
		\multicolumn{8}{l}{Hamming loss (the smaller, the better)}
		\\
		\hline
		$\tL_h/$Square & $\bm{.218} \pm  \bm{.002}$        & $    .058  \pm      .000\bullet$& $    .137  \pm      .004\bullet$& $\bm{.020} \pm  \bm{.000}$        & $    .195  \pm      .001         $& $\bm{.118} \pm  \bm{.000}$        & $    .146  \pm      .000\bullet$& $    .064  \pm      .000\bullet$\\
		$\tL_h/$Hinge & $    .223  \pm      .002\bullet$& $\bm{.057} \pm  \bm{.000}$        & $\bm{.128} \pm  \bm{.003}$        & $    .020  \pm      .001         $& $    .196  \pm      .000\bullet$& $    .125  \pm      .000\bullet$& $    .177  \pm      .001\bullet$& $\bm{.062} \pm  \bm{.000}$        \\
		\hline
		$\tL_r/$Square & $    .231  \pm      .002\bullet$& $    .058  \pm      .001\bullet$& $    .148  \pm      .009\bullet$& $    .027  \pm      .003\bullet$& $    .197  \pm      .001\bullet$& $    .120  \pm      .001         $& $    .139  \pm      .004\bullet$& $    .068  \pm      .000\bullet$\\
		$\tL_r/$Hinge & $    .249  \pm      .004\bullet$& $    .057  \pm      .000         $& $    .133  \pm      .001\bullet$& $    .026  \pm      .003\bullet$& $\bm{.195} \pm  \bm{.000}$        & $    .119  \pm      .001         $& $    .162  \pm      .001\bullet$& $    .064  \pm      .000\bullet$\\
		$\tL_r/$Sigmoid & $    .225  \pm      .002         $& $    .057  \pm      .000         $& $    .130  \pm      .004         $& $    .024  \pm      .002\bullet$& $    .195  \pm      .001         $& $    .122  \pm      .000\bullet$& $\bm{.119} \pm  \bm{.000}$        & $    .068  \pm      .000\bullet$\\
		\hline
		$\L_h/$Square & $    .280  \pm      .017\bullet$& $    .100  \pm      .003\bullet$& $    .163  \pm      .021\bullet$& $    .028  \pm      .004         $& $    .213  \pm      .005\bullet$& $    .124  \pm      .001\bullet$& $    .157  \pm      .003\bullet$& $    .077  \pm      .001\bullet$\\
		$\L_h/$Hinge & $    .265  \pm      .012\bullet$& $    .061  \pm      .002\bullet$& $    .151  \pm      .012\bullet$& $    .023  \pm      .004         $& $    .220  \pm      .001\bullet$& $    .126  \pm      .000\bullet$& $    .185  \pm      .019\bullet$& $    .071  \pm      .000\bullet$\\
		$\L_r/$Square & $    .278  \pm      .016\bullet$& $    .109  \pm      .007\bullet$& $    .163  \pm      .006\bullet$& $    .038  \pm      .001\bullet$& $    .206  \pm      .003\bullet$& $    .157  \pm      .015\bullet$& $    .181  \pm      .008\bullet$& $    .084  \pm      .001\bullet$\\
		$\L_r/$Hinge & $    .271  \pm      .009\bullet$& $    .077  \pm      .005\bullet$& $    .149  \pm      .004\bullet$& $    .034  \pm      .002\bullet$& $    .204  \pm      .002\bullet$& $    .121  \pm      .002         $& $    .198  \pm      .007\bullet$& $    .078  \pm      .001\bullet$\\
		$\L_r/$Sigmoid & $    .265  \pm      .009\bullet$& $    .064  \pm      .001\bullet$& $    .149  \pm      .013\bullet$& $    .035  \pm      .001\bullet$& $    .212  \pm      .001\bullet$& $    .124  \pm      .000\bullet$& $    .138  \pm      .005\bullet$& $    .077  \pm      .001\bullet$\\
		\hline
		\multicolumn{8}{l}{Ranking loss (the smaller, the better)}
		\\
		\hline
		$\tL_h/$MSE & $    .210  \pm      .002\bullet$& $    .216  \pm      .006\bullet$& $    .137  \pm      .014\bullet$& $    .050  \pm      .002         $& $    .237  \pm      .002         $& $    .158  \pm      .002\bullet$& $    .129  \pm      .002\bullet$& $    .061  \pm      .001\bullet$\\
		$\tL_h/$Hinge & $    .237  \pm      .008\bullet$& $    .203  \pm      .006         $& $    .122  \pm      .010         $& $    .049  \pm      .002         $& $    .244  \pm      .002\bullet$& $    .167  \pm      .003\bullet$& $    .150  \pm      .004\bullet$& $\bm{.056} \pm  \bm{.002}$        \\
		\hline
		$\tL_r/$Square & $    .208  \pm      .000\bullet$& $    .221  \pm      .007\bullet$& $    .138  \pm      .011\bullet$& $    .050  \pm      .001         $& $\bm{.237} \pm  \bm{.002}$        & $    .157  \pm      .003\bullet$& $    .114  \pm      .003\bullet$& $    .062  \pm      .002\bullet$\\
		$\tL_r/$Hinge & $    .214  \pm      .003\bullet$& $    .201  \pm      .008         $& $    .120  \pm      .010\bullet$& $    .050  \pm      .001\bullet$& $    .237  \pm      .002         $& $\bm{.153} \pm  \bm{.003}$        & $    .134  \pm      .002\bullet$& $    .058  \pm      .002\bullet$\\
		$\tL_r/$Sigmoid & $\bm{.204} \pm  \bm{.002}$        & $\bm{.201} \pm  \bm{.008}$        & $\bm{.113} \pm  \bm{.009}$        & $\bm{.048} \pm  \bm{.001}$        & $    .240  \pm      .002         $& $    .157  \pm      .000         $& $\bm{.096} \pm  \bm{.000}$        & $    .069  \pm      .000\bullet$\\
		\hline
		$\L_h/$Square & $    .266  \pm      .018\bullet$& $    .248  \pm      .004\bullet$& $    .168  \pm      .018\bullet$& $    .061  \pm      .013     \bullet    $& $    .302  \pm      .012\bullet$& $    .222  \pm      .001\bullet$& $    .161  \pm      .002\bullet$& $    .110  \pm      .007\bullet$\\
		$\L_h/$Hinge & $    .268  \pm      .017\bullet$& $    .250  \pm      .002\bullet$& $    .161  \pm      .020\bullet$& $    .054  \pm      .008         $& $    .312  \pm      .009\bullet$& $    .205  \pm      .003\bullet$& $    .182  \pm      .007\bullet$& $    .091  \pm      .001\bullet$\\
		$\L_r/$Square & $    .267  \pm      .019\bullet$& $    .258  \pm      .003\bullet$& $    .191  \pm      .015\bullet$& $    .067  \pm      .012\bullet$& $    .302  \pm      .012\bullet$& $    .235  \pm      .006\bullet$& $    .174  \pm      .015\bullet$& $    .123  \pm      .007\bullet$\\
		$\L_r/$Hinge & $    .277  \pm      .014\bullet$& $    .253  \pm      .001\bullet$& $    .155  \pm      .010\bullet$& $    .061  \pm      .011   \bullet       $& $    .305  \pm      .010\bullet$& $    .224  \pm      .003\bullet$& $    .190  \pm      .005\bullet$& $    .117  \pm      .006\bullet$\\
		$\L_r/$Sigmoid & $    .261  \pm      .018\bullet$& $    .251  \pm      .001\bullet$& $    .164  \pm      .045         $& $    .059  \pm      .011         $& $    .304  \pm      .009\bullet$& $    .231  \pm      .008\bullet$& $    .134  \pm      .016\bullet$& $    .114  \pm      .003\bullet$\\
		\hline

		\multicolumn{8}{l}{Average precision (the greater, the better)}
		\\
		\hline
		$\tL_h/$Square & $    .714  \pm      .004\bullet$& $    .519  \pm      .014         $& $    .773  \pm      .021\bullet$& $    .905  \pm      .001         $& $    .640  \pm      .001         $& $\bm{.711} \pm  \bm{.002}$        & $    .811  \pm      .003\bullet$& $    .797  \pm      .004\bullet$\\
		$\tL_h/$Hinge & $    .696  \pm      .009\bullet$& $    .519  \pm      .015\bullet$& $    .798  \pm      .008\bullet$& $    .906  \pm      .001         $& $    .631  \pm      .001\bullet$& $    .691  \pm      .000\bullet$& $    .790  \pm      .003\bullet$& $\bm{.804} \pm  \bm{.004}$        \\
		\hline
		$\tL_r/$Square & $    .713  \pm      .007\bullet$& $    .518  \pm      .010         $& $    .777  \pm      .013\bullet$& $    .902  \pm      .002         $& $\bm{.640} \pm  \bm{.001}$        & $    .705  \pm      .002\bullet$& $    .827  \pm      .004\bullet$& $    .790  \pm      .005\bullet$\\
		$\tL_r/$Hinge & $    .710  \pm      .007\bullet$& $\bm{.525} \pm  \bm{.017}$        & $    .795  \pm      .015\bullet$& $    .903  \pm      .001         $& $    .636  \pm      .001\bullet$& $    .710  \pm      .001         $& $    .801  \pm      .002\bullet$& $    .798  \pm      .004\bullet$\\
		$\tL_r/$Sigmoid & $\bm{.720} \pm  \bm{.005}$        & $    .521  \pm      .018         $& $\bm{.809} \pm  \bm{.011}$        & $\bm{.906} \pm  \bm{.001}$        & $    .640  \pm      .000         $& $    .696  \pm      .000\bullet$& $\bm{.852} \pm  \bm{.003}$        & $    .765  \pm      .002\bullet$\\
		\hline
		$\L_h/$Square & $    .648  \pm      .022\bullet$& $    .457  \pm      .002\bullet$& $    .732  \pm      .026\bullet$& $    .846  \pm      .009 \bullet        $& $    .579  \pm      .010\bullet$& $    .651  \pm      .002\bullet$& $    .783  \pm      .018\bullet$& $    .708  \pm      .012\bullet$\\
		$\L_h/$Hinge & $    .645  \pm      .021\bullet$& $    .462  \pm      .003\bullet$& $    .744  \pm      .014\bullet$& $    .883  \pm      .012 \bullet         $& $    .539  \pm      .011\bullet$& $    .664  \pm      .003\bullet$& $    .754  \pm      .019\bullet$& $    .737  \pm      .010\bullet$\\
		$\L_r/$Square & $    .645  \pm      .024\bullet$& $    .451  \pm      .006\bullet$& $    .700  \pm      .024\bullet$& $    .802  \pm      .015\bullet$& $    .580  \pm      .008\bullet$& $    .619  \pm      .019\bullet$& $    .758  \pm      .005\bullet$& $    .685  \pm      .011\bullet$\\
		$\L_r/$Hinge & $    .635  \pm      .017\bullet$& $    .452  \pm      .004\bullet$& $    .751  \pm      .012\bullet$& $    .830  \pm      .013\bullet$& $    .573  \pm      .007\bullet$& $    .640  \pm      .003\bullet$& $    .758  \pm      .014\bullet$& $    .692  \pm      .013\bullet$\\
		$\L_r/$Sigmoid & $    .651  \pm      .022\bullet$& $    .453  \pm      .001\bullet$& $    .758  \pm      .025\bullet$& $    .843  \pm      .011 \bullet         $& $    .577  \pm      .006\bullet$& $    .647  \pm      .003\bullet$& $    .823  \pm      .015\bullet$& $    .696  \pm      .006\bullet$\\
		\hline
	\end{tabular}
\end{table*}

\section{Experiment}
The experiments for CCMN data are firstly reported, followed by the experiments for PML data.

\subsection{Study on CCMN Data}



We perform experiments on 8 benchmark data sets \footnote[1]{Publicly available at \url{http://mulan.sourceforge.net/datasets.html} and \url{http://meka.sourceforge.net/\#datasets}} which spanned a broad of applications: scene and mirflickr for image annotation, music\_emotion and music\_style for music recognition, yeast for protein function prediction as well as slashdot, enron and tmc for text categorization. For each data set, we randomly sample 50\% data for training and 30\% data for testing while the rest 20\% data is used for validation. To inject label noise into training examples, each class label is flipped according to noise rate $\rho_{-1}^{j}$ and $\rho_{+1}^{j}$ randomly sampled from $\{0.1,0.2,0.3,0.4,0.5\}$. We repeat experiments 5 times with different combinations of noise rates and report the averaging results.


We compare the performance of the proposed method with their baselines. Specifically, two variants of the proposed method, i.e., $\tLh$ (Eq.(\ref{eq:hl_ue})) and $\tLr$ (Eq.(\ref{eq:rl_ue}))  are compared with their baselines, i.e., $\Lh$ (Eq.(\ref{eq:hl})) and $\Lr$ (Eq.(\ref{eq:rl})). For $\tLh$ and $\Lh$, the surrogate loss function $\phi$ is defined as the least square loss and hinge loss; For $\tLr$ and $\Lr$, besides the least square loss and hinge loss, we also adopt Sigmoid loss, since it has been proved as a consistent surrogate loss with respective to ranking loss \cite{gao2013consistency}. For all methods, we use a linear model and a three-layer MLP with 128 hidden neural units as the based model. We use Adam \cite{KDP14} as the optimizer for 200 epochs and learning rate is chosen from \{5e-2,5e-3,5e-4\}. The $\ell_{2}$-regularization term is added with the parameter of 1e-4.  The testing performance of the model is reported with the best validation result out of all iterations. We conduct all experiments with Pytorch \cite{paszke2019pytorch}. 

As mentioned before, there are different criteria for evaluating the performance of multi-label learning. In our experiments, we employ three commonly used criteria including \textit{hamming loss}, \textit{ranking loss} and \textit{average precision}. More details can be found in \cite{zhang2013review}.






\begin{table*}[!htb]
	\centering
	\tiny
	\caption{Comparison results between $\tLh$, $\tLr$ and their baselines with diverse loss functions using MLP models,  where $\bullet$/$\circ$ indicates whether the proposed method is significantly superior/inferior to the comparing  methods via paired $t$-test (at 0.05 significance level).}\label{table:mlp_res}
	\renewcommand{\multirowsetup}{\centering}
	\begin{tabular}{l|l l l l l l l l}
		\hline
		Data& \makecell[c]{yeast} & \makecell[c]{enron} & \makecell[c]{scene} & \makecell[c]{slashdot} & \makecell[c]{music\_emotion} & \makecell[c]{music\_style} & \makecell[c]{mirflickr} & \makecell[c]{tmc} \\
		\hline
		\multicolumn{8}{l}{Hamming loss (the smaller, the better)}
		\\
		\hline
		$\tL_h/$Square & $\bm{.211} \pm  \bm{.002}$        & $    .056  \pm      .000         $& $    .124  \pm      .005         $& $    .022  \pm      .000         $& $    .193  \pm      .001         $& $\bm{.113} \pm  \bm{.001}$        & $    .109  \pm      .002\bullet$& $    .065  \pm      .000\bullet$\\
		$\tL_h/$Hinge & $    .211  \pm      .005         $& $    .057  \pm      .000         $& $    .124  \pm      .004         $& $    .023  \pm      .000         $& $    .193  \pm      .000         $& $    .113  \pm      .001\bullet$& $\bm{.106} \pm  \bm{.003}$        & $\bm{.064} \pm  \bm{.000}$        \\
		\hline
		$\tL_r/$Square & $    .214  \pm      .002\bullet$& $    .057  \pm      .000         $& $    .136  \pm      .005\bullet$& $    .022  \pm      .001\bullet$& $    .195  \pm      .003\bullet$& $    .115  \pm      .000\bullet$& $    .113  \pm      .003\bullet$& $    .069  \pm      .000\bullet$\\
		$\tL_r/$Hinge & $    .211  \pm      .004         $& $    .057  \pm      .001         $& $    .127  \pm      .005\bullet$& $    .023  \pm      .000         $& $\bm{.192} \pm  \bm{.001}$        & $    .116  \pm      .001\bullet$& $    .109  \pm      .003\bullet$& $    .065  \pm      .000\bullet$\\
		$\tL_r/$Sigmoid & $    .213  \pm      .004         $& $\bm{.056} \pm  \bm{.000}$        & $\bm{.123} \pm  \bm{.006}$        & $\bm{.021} \pm  \bm{.001}$        & $    .192  \pm      .000         $& $    .116  \pm      .000\bullet$& $    .109  \pm      .003         $& $    .068  \pm      .000\bullet$\\
		\hline
		$\L_h/$Square & $    .247  \pm      .015\bullet$& $    .071  \pm      .004\bullet$& $    .153  \pm      .020\bullet$& $    .025  \pm      .004         $& $    .206  \pm      .001\bullet$& $    .120  \pm      .001\bullet$& $    .132  \pm      .004\bullet$& $    .073  \pm      .000\bullet$\\
		$\L_h/$Hinge & $    .253  \pm      .017\bullet$& $    .058  \pm      .001         $& $    .146  \pm      .015\bullet$& $    .024  \pm      .002         $& $    .208  \pm      .003\bullet$& $    .120  \pm      .002\bullet$& $    .124  \pm      .007\bullet$& $    .075  \pm      .002\bullet$\\
		$\L_r/$Square & $    .250  \pm      .009\bullet$& $    .073  \pm      .005\bullet$& $    .140  \pm      .008\bullet$& $    .029  \pm      .003\bullet$& $    .203  \pm      .001\bullet$& $    .117  \pm      .001\bullet$& $    .136  \pm      .002\bullet$& $    .075  \pm      .002\bullet$\\
		$\L_r/$Hinge & $    .243  \pm      .012\bullet$& $    .061  \pm      .001\bullet$& $    .142  \pm      .006\bullet$& $    .027  \pm      .004         $& $    .200  \pm      .002\bullet$& $    .117  \pm      .002\bullet$& $    .135  \pm      .004\bullet$& $    .071  \pm      .002\bullet$\\
		$\L_r/$Sigmoid & $    .247  \pm      .018\bullet$& $    .067  \pm      .005\bullet$& $    .147  \pm      .012\bullet$& $    .027  \pm      .005         $& $    .208  \pm      .003\bullet$& $    .121  \pm      .002\bullet$& $    .126  \pm      .006\bullet$& $    .073  \pm      .002\bullet$\\
		\hline

		\multicolumn{8}{l}{Ranking loss (the smaller, the better)}
		\\
		\hline
		$\tL_h/$Square & $    .189  \pm      .006         $& $    .216  \pm      .009\bullet$& $    .122  \pm      .008\bullet$& $    .053  \pm      .004\bullet$& $    .231  \pm      .003         $& $    .153  \pm      .001         $& $    .081  \pm      .005         $& $\bm{.064} \pm  \bm{.001}$        \\
		$\tL_h/$Hinge & $    .200  \pm      .007\bullet$& $    .215  \pm      .007         $& $    .119  \pm      .006\bullet$& $    .048  \pm      .002         $& $    .242  \pm      .004\bullet$& $    .163  \pm      .001\bullet$& $    .086  \pm      .002\bullet$& $    .065  \pm      .001         $\\
		\hline
		$\tL_r/$Square & $.188 \pm  .008$        & $    .214  \pm      .007         $& $    .120  \pm      .009\bullet$& $    .050  \pm      .003         $& $    .233  \pm      .001\bullet$& $\bm{.152} \pm  \bm{.001}$        & $    .085  \pm      .006\bullet$& $    .065  \pm      .002         $\\
		$\tL_r/$Hinge & $    .190  \pm      .005         $& $    .214  \pm      .008\bullet$& $    .119  \pm      .009\bullet$& $    .047  \pm      .002         $& $    .233  \pm      .001\bullet$& $    .158  \pm      .002\bullet$& $    .081  \pm      .004\bullet$& $    .065  \pm      .002         $\\
		$\tL_r/$Sigmoid & $    \bm{.188}  \pm      \bm{.007}         $& $\bm{.206} \pm  \bm{.010}$        & $\bm{.108} \pm  \bm{.006}$        & $\bm{.044} \pm  \bm{.001}$        & $\bm{.230} \pm  \bm{.001}$        & $    .153  \pm      .003         $& $\bm{.077} \pm  \bm{.004}$        & $    .078  \pm      .003\bullet$\\
		\hline
		$\L_h/$Square & $    .228  \pm      .018\bullet$& $    .252  \pm      .001\bullet$& $    .150  \pm      .016\bullet$& $    .052  \pm      .005\bullet$& $    .293  \pm      .006\bullet$& $    .207  \pm      .001\bullet$& $    .114  \pm      .007\bullet$& $    .115  \pm      .004\bullet$\\
		$\L_h/$Hinge & $    .238  \pm      .010\bullet$& $    .255  \pm      .001\bullet$& $    .150  \pm      .026\bullet$& $    .049  \pm      .003\bullet$& $    .291  \pm      .007\bullet$& $    .194  \pm      .006\bullet$& $    .095  \pm      .013         $& $    .095  \pm      .001\bullet$\\
		$\L_r/$MSE & $    .233  \pm      .016\bullet$& $    .255  \pm      .003\bullet$& $    .155  \pm      .018\bullet$& $    .052  \pm      .005\bullet$& $    .294  \pm      .007\bullet$& $    .203  \pm      .003\bullet$& $    .117  \pm      .004\bullet$& $    .121  \pm      .006\bullet$\\
		$\L_r/$Hinge & $    .231  \pm      .017\bullet$& $    .249  \pm      .002\bullet$& $    .146  \pm      .017\bullet$& $    .054  \pm      .008\bullet$& $    .294  \pm      .004\bullet$& $    .203  \pm      .003\bullet$& $    .112  \pm      .004\bullet$& $    .115  \pm      .007\bullet$\\
		$\L_r/$Sigmoid & $    .233  \pm      .016\bullet$& $    .252  \pm      .003\bullet$& $    .180  \pm      .056         $& $    .049  \pm      .004         $& $    .290  \pm      .009\bullet$& $    .208  \pm      .002\bullet$& $    .112  \pm      .010\bullet$& $    .113  \pm      .004\bullet$\\
		\hline

		\multicolumn{8}{l}{Average precision (the greater, the better)}
		\\
		\hline
		$\tL_h/$Square & $\bm{.736} \pm  \bm{.004}$        & $    .503  \pm      .010\bullet$& $    .797  \pm      .015\bullet$& $    .903  \pm      .002\bullet$& $    .649  \pm      .002         $& $\bm{.719} \pm  \bm{.004}$        & $    .874  \pm      .006\bullet$& $    .786  \pm      .002\bullet$\\
		$\tL_h/$Hinge & $    .730  \pm      .003         $& $    .493  \pm      .009\bullet$& $    .801  \pm      .010\bullet$& $    .906  \pm      .001\bullet$& $    .638  \pm      .002\bullet$& $    .714  \pm      .005         $& $    .874  \pm      .002\bullet$& $\bm{.790} \pm  \bm{.001}$        \\
		\hline
		$\tL_r/$Square & $    .735  \pm      .004         $& $\bm{.516} \pm  \bm{.013}$        & $    .792  \pm      .013\bullet$& $    .905  \pm      .002         $& $    .646  \pm      .002         $& $    .717  \pm      .002         $& $    .869  \pm      .007\bullet$& $    .781  \pm      .003\bullet$\\
		$\tL_r/$Hinge & $    .734  \pm      .005         $& $    .497  \pm      .010\bullet$& $    .797  \pm      .017\bullet$& $    .907  \pm      .002         $& $    .647  \pm      .002\bullet$& $    .712  \pm      .003\bullet$& $    .877  \pm      .004\bullet$& $    .788  \pm      .001\bullet$\\
		$\tL_r/$Sigmoid & $    .734  \pm      .002         $& $    .515  \pm      .015         $& $\bm{.816} \pm  \bm{.009}$        & $\bm{.910} \pm  \bm{.001}$        & $\bm{.650} \pm  \bm{.002}$        & $    .710  \pm      .003         $& $\bm{.882} \pm  \bm{.006}$        & $    .758  \pm      .002\bullet$\\
		\hline
		$\L_h/$Square & $    .685  \pm      .016\bullet$& $    .448  \pm      .010\bullet$& $    .770  \pm      .016\bullet$& $    .892  \pm      .007 \bullet        $& $    .591  \pm      .003\bullet$& $    .672  \pm      .007\bullet$& $    .840  \pm      .005\bullet$& $    .707  \pm      .015\bullet$\\
		$\L_h/$Hinge & $    .674  \pm      .014\bullet$& $    .451  \pm      .005\bullet$& $    .763  \pm      .020\bullet$& $    .896  \pm      .013  \bullet       $& $    .580  \pm      .006\bullet$& $    .673  \pm      .001\bullet$& $    .863  \pm      .027         $& $    .710  \pm      .017\bullet$\\
		$\L_r/$Square & $    .683  \pm      .015\bullet$& $    .452  \pm      .007\bullet$& $    .761  \pm      .021\bullet$& $    .892  \pm      .012 \bullet         $& $    .588  \pm      .004\bullet$& $    .679  \pm      .003\bullet$& $    .837  \pm      .007\bullet$& $    .702  \pm      .015\bullet$\\
		$\L_r/$Hinge & $    .689  \pm      .017\bullet$& $    .462  \pm      .004\bullet$& $    .771  \pm      .013\bullet$& $    .877  \pm      .013 \bullet        $& $    .591  \pm      .004\bullet$& $    .677  \pm      .002\bullet$& $    .842  \pm      .006\bullet$& $    .712  \pm      .016\bullet$\\
		$\L_r/$Sigmoid & $    .685  \pm      .015\bullet$& $    .461  \pm      .004\bullet$& $    .752  \pm      .037\bullet$& $    .902  \pm      .003\bullet$& $    .593  \pm      .007\bullet$& $    .663  \pm      .005\bullet$& $    .850  \pm      .010\bullet$& $    .700  \pm      .012\bullet$\\
		\hline
	\end{tabular}
\end{table*}

Table \ref{table:linear_res} and Table \ref{table:mlp_res} report comparison results of the proposed methods between baseline methods using linear and MLP models, respectively. For each data set, paired $t$-test based on 5 repeats (at 0.05 significance level) is conducted to show weather the proposed unbiased estimator is significantly different from the comparing methods. From the results, it is obvious that the proposed method outperforms their corresponding baselines with significant superiority in almost all cases, which validates the effectiveness of the proposed method. The performance of $\tL_r$ is generally superior to $\tL_h$, which validates the label correlation is beneficial for learning with class-conditional multi-label noise.

\subsection{Study on PML Data}

To validate the effectiveness of the proposed uPML method, we perform experiments on totally 8 data sets, including 3 real-world PML datasets, music\_style, music\_emotion and mirflickr as well as 5 multi-label dataets, yeast, enron, scene, slashdot and tmc. Regarding the multi-label datasets, for each instance, the negative label is flipped according to noise rate $\rho$ randomly sampled from $\{0.1,0.2,0.3,0.4,0.5,0.6\}$. We repeat experiments 5 times with different noise rate and report the averaging results.

We compare with five state-of-art PML algorithms: PML-NI\cite{xie2020pmlni}, PARTICLE (including two implementations: PAR-VLS and PAR-MAP) \cite{zhang2020pml}, PML-LRS \cite{LS19} and fPML \cite{yu2018pml}. To make a fair comparison, we use a linear classifier as the base model for uPML. We use Adam \cite{KDP14} as the optimizer for 200 epochs and the learning rate is chosen from \{0.05,0.005,0.0005\}. The $\ell_{2}$-regularization term is added with the parameter of 1e-4. The testing performance of the model is reported with the best validation result out of all iterations. For the other comparing methods, parameters are determined by the performance on validation set if no default value given in their literature.


\begin{table*}[!htb]
	\tiny
	\centering
	\caption{Comparison results between the proposed method (using linear model) and PML methods,  where $\bullet$/$\circ$ indicates whether the proposed method is significantly superior/inferior to the comparing  method via paired $t$-test (at 0.05 significance level).}\label{table:pml_res}
	\renewcommand{\multirowsetup}{\centering}
	\begin{tabular}{l|l l l | l l l l l}
		\hline
		Data& \makecell[c]{music\_style} & \makecell[c]{music\_emotion} & \makecell[c]{mirflickr} & \makecell[c]{yeast} & \makecell[c]{enron} & \makecell[c]{scene} & \makecell[c]{slashdot} & \makecell[c]{tmc}\\
		\hline
		\multicolumn{8}{l}{Hamming loss (the smaller, the better)}
		\\
		\hline
		$\tL_h^{\text{PML}}/$Square & $\bm{.117} \pm  \bm{.001}       $ & $    .196  \pm      .002         $& $    .170  \pm      .003\bullet $ & $    .210  \pm      .001         $& $\bm{.056} \pm  \bm{.000}       $ & $\bm{.124} \pm  \bm{.000}       $ & $    .018  \pm      .000\bullet $ & $\bm{.064} \pm  \bm{.000}       $ \\
		$\tL_r^{\text{PML}}/$Sigmoid & $    .121  \pm      .003         $& $\bm{.195} \pm  \bm{.001}       $ & $    \bm{.165}  \pm      \bm{.002}         $& $\bm{.208} \pm  \bm{.001}       $ & $    .064  \pm      .010         $& $    .145  \pm      .001\bullet $ & $\bm{.017} \pm  \bm{.000}       $ & $    .066  \pm      .000\bullet $ \\
		\hline
		PMLNI & $    .157  \pm      .008\bullet $ & $    .253  \pm      .005\bullet $ & $    .228  \pm      .005\bullet $ & $    .269  \pm      .027\bullet $ & $    .128  \pm      .004\bullet $ & $    .322  \pm      .047\bullet $ & $    .065  \pm      .007\bullet $ & $    .082  \pm      .002\bullet $ \\
		PMLLRS & $    .167  \pm      .020\bullet $ & $    .361  \pm      .049\bullet $ & $    .209  \pm      .007\bullet $ & $    .244  \pm      .016\bullet $ & $    .174  \pm      .006\bullet $ & $    .155  \pm      .021\bullet $ & $    .023  \pm      .002\bullet $ & $    .088  \pm      .003\bullet $ \\
		PARVLS & $    .124  \pm      .001\bullet $ & $    .254  \pm      .002\bullet $ & $    .219  \pm      .005\bullet $ & $    .259  \pm      .009\bullet $ & $    .107  \pm      .006\bullet $ & $    .248  \pm      .017\bullet $ & $    .062  \pm      .004\bullet $ & $    .084  \pm      .001\bullet $ \\
		PARMAP & $    .142  \pm      .004\bullet $ & $    .252  \pm      .002\bullet $ & $.172 \pm  .002       $ & $    .239  \pm      .013\bullet $ & $    .157  \pm      .008\bullet $ & $    .289  \pm      .021\bullet $ & $    .019  \pm      .001\bullet $ & $    .101  \pm      .009\bullet $ \\
		fPML & $    .143  \pm      .000\bullet $ & $    .221  \pm      .000\bullet $ & $    .254  \pm      .001\bullet $ & $    .303  \pm      .002\bullet $ & $    .111  \pm      .008\bullet $ & $    .179  \pm      .000\bullet $ & $    .054  \pm      .000\bullet $ & $    .101  \pm      .000\bullet $ \\
		\hline

		\multicolumn{8}{l}{Ranking loss (the smaller, the better)}
		\\
		\hline
		$\tL_h^{\text{PML}}/$Square & $    .154  \pm      .004 \bullet        $& $\bm{.225} \pm  \bm{.003}       $ & $    .100  \pm      .002\bullet $ & $    .199  \pm      .005\bullet $ & $    .193  \pm      .006\bullet $ & $    .120  \pm      .008\bullet         $& $    .056  \pm      .003\bullet $ & $\bm{.062} \pm  \bm{.003}       $ \\
		$\tL_r^{\text{PML}}/$Sigmoid & $\bm{.146} \pm  \bm{.004}       $ & $    .229  \pm      .000\bullet $ & $\bm{.090} \pm  \bm{.002}       $ & $\bm{.189} \pm  \bm{.001}       $ & $\bm{.172} \pm  \bm{.003}       $ & $\bm{.112} \pm  \bm{.002}       $ & $\bm{.054} \pm  \bm{.002}       $ & $    .066  \pm      .000\bullet $ \\
		\hline
		PMLNI & $    .149  \pm      .003         $& $    .250  \pm      .003\bullet $ & $    .121  \pm      .005\bullet $ & $    .231  \pm      .016\bullet $ & $    .338  \pm      .006\bullet $ & $    .207  \pm      .018\bullet $ & $    .118  \pm      .006\bullet $ & $    .123  \pm      .007\bullet $ \\
		PMLLRS & $    .154  \pm      .005\bullet      $& $    .265  \pm      .002\bullet $ & $    .112  \pm      .003\bullet $ & $    .223  \pm      .009\bullet $ & $    .332  \pm      .013\bullet $ & $    .198  \pm      .040\bullet $ & $    .086  \pm      .002\bullet $ & $    .122  \pm      .010\bullet $ \\
		PARVLS & $    .238  \pm      .004\bullet $ & $    .349  \pm      .005\bullet $ & $    .230  \pm      .032\bullet $ & $    .245  \pm      .007\bullet $ & $    .448  \pm      .007\bullet $ & $    .388  \pm      .016\bullet $ & $    .216  \pm      .058\bullet $ & $    .205  \pm      .005\bullet $ \\
		PARMAP & $    .207  \pm      .003\bullet $ & $    .310  \pm      .003\bullet $ & $    .110  \pm      .009\bullet $ & $    .245  \pm      .017\bullet $ & $    .293  \pm      .016\bullet $ & $    .318  \pm      .023\bullet $ & $    .064  \pm      .004\bullet $ & $    .203  \pm      .009\bullet $ \\
		fPML & $    .155  \pm      .003 \bullet        $& $    .264  \pm      .007\bullet $ & $    .107  \pm      .010\bullet $ & $    .219  \pm      .008\bullet $ & $    .342  \pm      .031\bullet $ & $    .230  \pm      .056\bullet $ & $    .093  \pm      .010\bullet $ & $    .132  \pm      .004\bullet $ \\
		\hline

		\multicolumn{8}{l}{Average precision (the greater, the better)}
		\\
		\hline
		$\tL_h^{\text{PML}}/$MSE & $    .716  \pm      .004         $& $\bm{.649} \pm  \bm{.004}       $ & $    .831  \pm      .002\bullet $ & $    .730  \pm      .004         $& $    .550  \pm      .005\bullet $ & $    .799  \pm      .010         $& $    .898  \pm      .003\bullet $ & $\bm{.799} \pm  \bm{.005}       $ \\
		$\tL_r^{\text{PML}}/$Sigmoid & $    .708  \pm      .009\bullet $ & $    .641  \pm      .002\bullet $ & $    \bm{.846}  \pm      \bm{.004}         $& $\bm{.732} \pm  \bm{.004}       $ & $\bm{.563} \pm  \bm{.004}       $ & $\bm{.802} \pm  \bm{.003}       $ & $\bm{.899} \pm  \bm{.003}       $ & $    .785  \pm      .003\bullet $ \\
		\hline
		PMLNI & $\bm{.723} \pm  \bm{.005}       $ & $    .602  \pm      .002\bullet $ & $    .792  \pm      .007\bullet $ & $    .687  \pm      .018\bullet $ & $    .270  \pm      .013\bullet $ & $    .695  \pm      .021\bullet $ & $    .614  \pm      .036\bullet $ & $    .689  \pm      .010\bullet $ \\
		PMLLRS & $    .706  \pm      .006\bullet $ & $    .580  \pm      .002\bullet $ & $    .814  \pm      .005\bullet $ & $    .704  \pm      .010\bullet $ & $    .269  \pm      .017\bullet $ & $    .715  \pm      .043\bullet $ & $    .829  \pm      .015\bullet $ & $    .692  \pm      .017\bullet $ \\
		PARVLS & $    .662  \pm      .004\bullet $ & $    .522  \pm      .004\bullet $ & $    .697  \pm      .063\bullet $ & $    .692  \pm      .010\bullet $ & $    .225  \pm      .008\bullet $ & $    .563  \pm      .010\bullet $ & $    .340  \pm      .158\bullet $ & $    .618  \pm      .006\bullet $ \\
		PARMAP & $    .659  \pm      .006\bullet $ & $    .541  \pm      .003\bullet $ & $.842 \pm  .004       $ & $    .680  \pm      .015\bullet $ & $    .304  \pm      .013\bullet $ & $    .567  \pm      .023\bullet $ & $    .862  \pm      .010\bullet $ & $    .595  \pm      .015\bullet $ \\
		fPML & $    .704  \pm      .003\bullet $ & $    .585  \pm      .004\bullet $ & $    .820  \pm      .018\bullet $ & $    .709  \pm      .008\bullet $ & $    .299  \pm      .033\bullet $ & $    .676  \pm      .064\bullet $ & $    .851  \pm      .011\bullet $ & $    .698  \pm      .011\bullet $ \\
		\hline
		
	\end{tabular}
\end{table*}

Table \ref{table:pml_res} reports the comparison result of two variants of the proposed uPML method against comparing PML methods. For each data set, pair $t$-test based on five repeats (at 0.05 significance level) is conducted to show weather the proposed unbiased estimator is significantly different from the comparing methods. The proposed uPML method significantly outperforms the comparing PML methods in almost all cases except for the case on music\_style, where PMLNI is significantly  better than uPML in terms of \emph{average precision}. In general, uPML with $\tL_r$ is better than uPML with $\tL_h$ due to the label correlation is considered for the former one. Theses results convincingly demonstrate the effectiveness of the proposed method.

\section{Conclusion}

In this paper, we study the problem of multi-label classification with class-conditional multiple noisy labels, where multiple class labels assigned to each instance may be corrupted simultaneously with class-conditional probabilities. From the perspective of the unbiased estimator, we derive efficient methods for solving CCMN problems with theoretical guarantee. Generally, we prove that learning from class-conditional multiple noisy labels with the proposed unbiased estimators is consistent with respective to hamming loss and ranking loss. Furthermore, we propose a novel method called uPML for solving PML problems, which can be regarded as a special case of CCMN framework. Empirical studies on multiple data sets validate the effectiveness of the proposed method. In the future, we will study CCMN with more loss functions.

\bibliographystyle{icml2020}\small
\bibliography{ref}

\newpage
\appendix

\section{Proof for Lemma 1}

\begin{proof}
	By using $\rho_{j}$ and $\rho_{-j}$ to denote $\rho_{y_{j}}^{j}$ and $\rho_{-y_{j}}^{j}$, respectively, we have
	\begin{align*}
	&\quad\E_{\ty}\left[\tLh(\f,\ty)\right] \\
	&=\sum_{j=1}^{q}(1-\rho_{j})\tphi(f_{j},y_{j})+\rho_{j}\tphi(f_{j},-y_{j}) \\
	&=\sum_{j=1}^{q}\frac{1}{1-\rho_{-1}^j-\rho_{+1}^j}[(1-\rho_{j})[(1-\rho_{-j})\phi(f_{j},y_{j})-\rho_{j}\phi(f_{j},-y_{j})]+\rho_{j}[(1-\rho_{j})\phi(f_{j},-y_{j})-\rho_{-j}\phi(f_{j},y_{j})]]\\
	&=\sum_{j=1}^{q}\frac{1}{1-\rho_{-1}^j-\rho_{+1}^j}[(1-\rho_{j})(1-\rho_{-j})\phi(f_{j},y_{j})-\rho_{j}\rho_{-j}\phi(f_{j},y_{j})]\\
	&=\sum_{j=1}^{K}\frac{1}{1-\rho_{-1}^j-\rho_{+1}^j}(1-\rho_{j}-\rho_{-j})\phi(f_{j},y_{j})=\Lh(\f,\y)
	\end{align*}
	which completes proof.
\end{proof}

\section{Proof for Theorem 2}

\begin{proof}
	The proof is mainly composed of the following two lemmas.
	
	\begin{lemma}\label{lemma:hl_rc}
		Let $\R_{n}(\tLh\circ\F)$ be the Rademacher complexity with respective to the unbiased estimator $\tLh$ and function class $\F$ over $S_{\rho}$ of $n$ training points drawn from $\D_{\rho}$, which can be defined as
		\begin{equation*}
		\R_{n}(\tLh\circ\F)=\E_{S_{\rho}}\E_{\sigma}\left[\sup_{\f\in{\F}}\frac{1}{n}\sum_{i=1}^{n}\sigma_{i}\tLh(\f(\x_{i}),\ty_{i})\right]
		\end{equation*}  
		Then, 
		\begin{equation*}
		\R_{n}(\tLh\circ\F)\leq qK_{\rho}\R_{n}(\F)
		\end{equation*}  
		where $K_{\rho}$ is the Lipschitz constant of $\tLh$.
	\end{lemma}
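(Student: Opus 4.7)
The plan is to exploit the additive structure of $\tLh$ over the $q$ labels and then peel off the surrogate $\tphi$ from each per-label term using Talagrand's contraction inequality, leaving pure Rademacher averages over the base function class $\F$.

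First I would start from the definition
\begin{equation*}
\R_{n}(\tLh\circ\F)=\E_{S_{\rho}}\E_{\bm\sigma}\left[\sup_{\f\in\F}\frac{1}{n}\sum_{i=1}^{n}\sigma_{i}\sum_{j=1}^{q}\tphi(\ty_{ij}f_{j}(\x_{i}))\right],
\end{equation*}
interchange the two finite sums, and use the subadditivity of the supremum to pull the sum over $j$ outside, obtaining the bound
\begin{equation*}
\R_{n}(\tLh\circ\F)\leq\sum_{j=1}^{q}\E_{S_{\rho}}\E_{\bm\sigma}\left[\sup_{\f\in\F}\frac{1}{n}\sum_{i=1}^{n}\sigma_{i}\tphi(\ty_{ij}f_{j}(\x_{i}))\right].
\end{equation*}

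Next, for each fixed $j$, I condition on the realized noisy labels $\{\ty_{ij}\}_{i=1}^n$ so that the mapping $t\mapsto\tphi(\ty_{ij}\,t)$ is a deterministic Lipschitz function of $t$ with constant $K_{\rho}$ (by the assumption that $K_{\rho}$ is the Lipschitz constant of $\tLh$, noting that $\ty_{ij}\in\{-1,+1\}$ only changes the sign of the argument and hence preserves the Lipschitz constant). Then I apply Talagrand's contraction inequality (Ledoux--Talagrand) to the composition to obtain
\begin{equation*}
\E_{\bm\sigma}\left[\sup_{\f\in\F}\frac{1}{n}\sum_{i=1}^{n}\sigma_{i}\tphi(\ty_{ij}f_{j}(\x_{i}))\right]\leq K_{\rho}\,\E_{\bm\sigma}\left[\sup_{\f\in\F}\frac{1}{n}\sum_{i=1}^{n}\sigma_{i}f_{j}(\x_{i})\right].
\end{equation*}
Taking expectation over $S_\rho$ and using $\E_{S_\rho}\E_{\bm\sigma}[\sup_{\f}\tfrac{1}{n}\sum_i\sigma_i f_j(\x_i)]\leq\R_n(\F)$ (since the $j$-th coordinate projection of $\F$ is contained in $\F$ in the sense used to define $\R_n(\F)$), each of the $q$ summands is bounded by $K_{\rho}\R_n(\F)$.

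Summing the $q$ bounds yields $\R_n(\tLh\circ\F)\leq qK_{\rho}\R_n(\F)$, as desired. The main obstacle I anticipate is the careful application of the contraction lemma in the presence of the random noisy labels $\ty_{ij}$: one has to verify that the contraction can be done after conditioning, and that the sign $\ty_{ij}\in\{-1,+1\}$ does not inflate the Lipschitz constant. Once this conditioning argument is made clean, the rest is a routine subadditivity-plus-contraction computation.
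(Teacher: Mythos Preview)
Your proposal is correct and follows essentially the same approach as the paper: decompose $\tLh$ into the $q$ per-label terms, use subadditivity of the supremum to pull the sum over $j$ outside, and then apply Talagrand's contraction to each term to arrive at $qK_\rho\R_n(\F)$. The paper's proof is slightly more terse (it does not spell out the conditioning on the noisy labels $\ty_{ij}$), but your added justification of that point is a harmless refinement rather than a different route.
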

\begin{proof}
	Recall that  $\tLh(\f(\x),\y)=\sum_{j=1}^{q}\tphi(y_{j}f_{j}(\x))$, then, we have
	\begin{align*}
	&\quad\R_{n}(\tLh\circ\F)\\
	&=\E_{S_{\rho}}\E_{\sigma}\left[\sup_{f_{1},...,f_{q}\in\F}\frac{1}{n}\sum_{i=1}^{n}\sigma_{i}\sum_{j=1}^{q}\tphi(y_{j}^{(i)}f_{j}(\x_{i}))\right]\\
	&=\E_{\mathcal{X}}\E_{\sigma}\left[\sup_{f_{1},...,f_{q}\in\F}\frac{1}{n}\sum_{\x_{i}\in\mathcal{X}}\sigma_{i}\sum_{j=1}^{q}\tphi(y_{j}^{(i)}f_{j}(\x_{i}))\right]\\
	&\leq \sum_{j=1}^{q}\E_{\mathcal{X}}\E_{\sigma}\left[\sup_{f_{j}\in\F}\frac{1}{n}\sum_{\x_{i}\in\mathcal{X}}\sigma_{i}\tphi(y^{(i)}_{j}f_{j}(\x_{i}))\right]\\
	&=q\R_{n}(\tphi\circ\F)
	\end{align*}
	Sequentially, according to Talagrand's contraction lemma \cite{LM13}, we have
	\begin{align*}
	\R_{n}(\tLh\circ\F)&\leq  q\R_{n}(\tphi\circ\F)\\ 
	&\leq qK_{\rho}\R_{n}(\F)
	\end{align*}
	which completes the proof.
\end{proof}

	Without loss of generality, assume that $\mu=\max_{j}\frac{1}{1-\rho_{-1}^j-\rho_{+1}^j},\forall j\in[q]$.
	\begin{lemma}\label{lemma:hl_rcd}
		For any $\delta>0$, with probability at least $1-\delta$,
		\begin{equation*}
		\max_{\f\in\mathcal{F}}\left\vert\hR_{\tL_h}(\f)-R_{\tL_h}(\f)\right\vert\leq 2qK_{\rho}\R_{n}(\F)+q\mu\Theta\sqrt{\frac{\ln\frac{2}{\delta}}{2n}}
		\end{equation*}
	\end{lemma}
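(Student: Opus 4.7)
The plan is to follow the standard uniform-convergence recipe: define
\[ g(S_\rho) := \max_{\f\in\F}\bigl|\hR_{\tLh}(\f) - R_{\tLh}(\f)\bigr|, \]
verify that $g$ satisfies a bounded-difference condition, apply McDiarmid's inequality to concentrate $g$ around its expectation, bound $\E[g]$ via Rademacher symmetrization, and finally invoke Lemma \ref{lemma:hl_rc} to translate the complexity of $\tLh\circ\F$ into that of $\F$.

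First I would establish a uniform bound on $|\tLh|$. Since $\tphi(y_jf_j)=\kappa_j\bigl[(1-\rho_{-y_j})\phi(y_jf_j)-\rho_{y_j}\phi(-y_jf_j)\bigr]$ with $\kappa_j=1/(1-\rho_{+1}^j-\rho_{-1}^j)\leq\mu$, and since $\phi$ is a nonnegative loss bounded by $\Theta$, the bracketed term is a signed convex combination of values in $[0,\Theta]$, giving $|\tphi|\leq\mu\Theta$. Summing the $q$ coordinates yields $|\tLh(\f,\ty)|\leq q\mu\Theta$. Consequently, replacing a single example $(\x_i,\ty_i)\in S_\rho$ by any $(\x'_i,\ty'_i)$ changes $\hR_{\tLh}(\f)$, and hence $g(S_\rho)$, by at most $q\mu\Theta/n$.

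Second, with these bounded-difference constants $c_i=q\mu\Theta/n$, McDiarmid's inequality gives, for any $\epsilon>0$,
\[ \Pr\bigl\{g(S_\rho)-\E[g(S_\rho)]\geq\epsilon\bigr\}\leq\exp\!\bigl(-2n\epsilon^2/(q\mu\Theta)^2\bigr). \]
Applying the same bound to the deviation in the other direction and union-bounding yields, with probability at least $1-\delta$,
\[ g(S_\rho)\leq\E[g(S_\rho)]+q\mu\Theta\sqrt{\frac{\ln(2/\delta)}{2n}}. \]
Third, I would control $\E[g(S_\rho)]$ by the usual ghost-sample symmetrization argument, which delivers $\E[g(S_\rho)]\leq 2\R_n(\tLh\circ\F)$. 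Plugging in Lemma \ref{lemma:hl_rc}'s inequality $\R_n(\tLh\circ\F)\leq qK_\rho\R_n(\F)$ then completes the proof.

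The main obstacle is the very first step: deriving the clean uniform bound $|\tLh|\leq q\mu\Theta$ that matches the McDiarmid constant appearing in the statement. Because $\tphi$ is a signed combination of $\phi$ evaluated at opposite arguments, a naive triangle inequality produces an extra factor of the form $1+|\rho_{+1}^j-\rho_{-1}^j|$; the assumption that $\phi\geq 0$ (implicit in the use of a surrogate loss) is what collapses this back into the single factor $\mu\Theta$. Once this is handled, McDiarmid, symmetrization, and the Rademacher reduction of Lemma \ref{lemma:hl_rc} are all textbook and combine immediately.
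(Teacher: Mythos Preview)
Your outline---bounded differences, McDiarmid, symmetrization, then Lemma \ref{lemma:hl_rc}---is exactly the route the paper takes. The only substantive difference is in how the bounded-difference constant $q\mu\Theta/n$ is obtained, and there your argument has a small slip. From the uniform bound $|\tphi|\leq\mu\Theta$ you can only conclude, via the triangle inequality, that replacing one sample changes $\hR_{\tLh}(\f)$ by at most $2q\mu\Theta/n$, not $q\mu\Theta/n$; this would double the deviation term in the final bound.

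The paper avoids this loss by computing the per-coordinate difference directly rather than bounding $|\tphi|$: for fixed $f_j$, the two values of $\tphi$ under a label flip satisfy
\[
\tphi(f_j,+1)-\tphi(f_j,-1)=\kappa_j\bigl[(1-\rho_{-1}^j)\phi(f_j)-\rho_{+1}^j\phi(-f_j)\bigr]-\kappa_j\bigl[(1-\rho_{+1}^j)\phi(-f_j)-\rho_{-1}^j\phi(f_j)\bigr]=\kappa_j\bigl(\phi(f_j)-\phi(-f_j)\bigr),
\]
which is bounded in absolute value by $\kappa_j\Theta\leq\mu\Theta$ because $\phi\in[0,\Theta]$. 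This algebraic cancellation is the step that delivers the constant $q\mu\Theta/n$ in the statement; your uniform-bound route does not recover it. Otherwise the proof is the same.
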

\begin{proof}
	Since both two directions can be proved in the same way, we consider one single direction $\sup_{f_{1},...,f_{q}\in\F}(\hR_{\tL}(\f)-R_{\tL}(\f))$. Note that the change in $\x_{i}$ leads to a perturbation of at most  $\frac{q\mu\Theta}{n}$ by replacing a single point $(\x_{i},\ty_{i})$ with $(\x_{i}^{\prime},\ty_{i}^{\prime})$, since the change in any $\tilde{y}^{(i)}_{j}$ leads to a perturbation as:
	\begin{equation}
	\frac{1}{n}\left|\frac{(1-\rho_{-1}^j)\phi(f_j)-\rho_{+1}^j\phi(-f_j)-[(1-\rho_{+1}^j)\phi(-f_j)-\rho_{-1}^j\phi(f_j)]}{1-\rho_{-1}^j-\rho_{+1}^j}\right|
	=\frac{1}{n}\left|\frac{\phi(f_j)-\phi(-f_j)}{1-\rho_{-1}^j-\rho_{+1}^{j}}\right|\leq\frac{\mu\Theta}{n}
	\end{equation}
	By using McDiarmid's inequality \cite{mohri2018foundations} to the single-direction uniform deviation $\sup_{f_{1},...,f_{k}\in\F}(\hR_{\tL}(\f)-R_{\tL}(\f))$, we have
	\begin{equation*}
	\mathbb{P}\left\{\sup_{\f\in\F}(\hR_{\tL}(\f)-R_{\tL}(\f))-\E\left[\sup_{\f\in\F}(\hR_{\tL}(\f)-R_{\tL}(\f))\right]\geq\epsilon\right\}\leq\exp\left(-\frac{2\epsilon^{2}}{n(\frac{q\mu\Theta}{n})^{2}}\right)
	\end{equation*}
	or equivalently, with probability at least $1-\delta$,
	\begin{equation*}
	\sup_{\f\in\F}(\hR_{\tL}(\f)-R_{\tL}(\f))\leq\E\left[\sup_{\f\in\F}(\hR_{\tL}(\f)-R_{\tL}(\f))\right]+q\mu\Theta\sqrt{\frac{\ln\frac{1}{\delta}}{2n}}
	\end{equation*}
	According to \cite{CE09}, it is straightforward to show that
	\begin{equation*}
	\E\left[\sup_{\f\in\F}(\hR_{\tL}(\f)-R_{\tL}(\f))\right]\leq 2\R_{n}(\tL\circ\F)
	\end{equation*}
	With the lemma \ref{lemma:hl_rc}, we complete the proof.
\end{proof}

	\noindent\textit{Proof of Theorem 2}. Based on the Lemma \ref{lemma:hl_rcd}, with $\f^{*}=\argmin_{\f\in\mathcal{F}}R_{\L,\D}(\f)$, it is obvious to prove the generalization error bound as follows:
	\begin{align*}
	&\quad R_{\L,\D}(\hat\f)-R_{\L,\D}(\f^{*})\\
	&=R_{\tL,\D_{\rho}}(\hat\f)-R_{\tL,\D_{\rho}}(\f^{*})\\
	&=\left(\hR_{\tL}(\hat\f)-\hR_{\tL}(\f^{*})\right) + \left(R_{\tL,\D_{\rho}}(\hat\f)-\hR_{\tL}(\hat\f) \right) +\left(\hR_{\tL}(\f^{*})-R_{\tL,\D_{\rho}}(\f^{*})\right)\\
	&\leq 0+2\max_{\f\in\F}\left\vert \hR_{\tL}(\f)-R_{\tL,\D_{\rho}}(\f)\right\vert
	\end{align*}
	The fist equation holds due to the unbiasedness of the estimator and for the last line, we used the fact $\hR_{\tL}(\hat{\f})\leq\hR_{\tL}(\f^{*})$ by the definition of $\hat\f$.
\end{proof}

\section{Proof for Theorem 3}

\begin{proof}
	With respective to $\tL_h$, the conditional surrogate loss
	can be defined as
	\begin{align*}
	\widetilde{W}(\p,\f)&=\sum_{\y\in\mathcal{Y}}p_{\y}\E_{\ty|\y}[\tL(\f(\x),\ty)]=\sum_{j=1}^{q}\sum_{\y\in\mathcal{Y}}p_{\y}\phi(y_jf_j(\x))\\
	&= \sum_{j=1}^q p_j^{+}\phi(f_j(\x))+ p_j^{-}\phi(-f_j(\x))
	\end{align*}
	where $p_j^{+}=\sum_{\y:y_j=+1}p_{\y}$ and $p_j^{-}=\sum_{\y:y_j=-1}p_{\y}$. This yields minimizing $\widetilde{W}(\p,\f)$ is equivalent to minimizing $W(\p,\f)$.
	Accordingly, we have 
	\begin{equation*}
	\widetilde{W}^{*}(\p)=W^*(\p)=\inf_{\f}W(\p,\f)=\sum_{j=1}^q\inf_{f_j}W_{j}(p_j^{+},f_j)
	\end{equation*}
	where $W_{j}(p_j^{+},f_j)=p_j^{+}\phi(f_j)+p_j^{-}\phi(-f_j)$. Based on the consistency of binary classification setting proposed in \cite{bartlett2006convexity,zhang2004consistency}, it is easy to prove that $\tL_h$ is consistent to hamming loss. Therefore, based on Corollary 25 in \cite{zhang2004consistency}, we have
	\begin{equation*}
	R_{L_h}(\hat\f)-R_{L_h}^{*}\leq\xi(R_{\L_{h}}(\hat\f)-R_{\L_{h}}^{*})
	\end{equation*}
	where $\xi$ is a non-negative concave function with $\xi(0)=0$. 
\end{proof}

\section{Proof for Lemma 2}

\begin{proof}
	With respective to each label pair $(y_{j},y_{k})$, considering the cases $(y_j=+1, y_k=-1)$, $(y_{j}=-1, y_{k}=+1)$, $(y_{j}=+1, y_{k}=+1)$ and $(y_{j}=-1, y_{k}=-1)$  separately, we can obtain the modified loss function $\tphi(j,k)$ by solving the following linear equation:
	
	\begin{equation*}
	\left[
	\begin{array}{cccc}
	\alpha_{+1}^j\alpha_{-1}^k&\rho_{+1}^j\rho_{-1}^k  & \rho_{+1}^j\alpha_{-1}^k    & \rho_{-1}^k\alpha_{+1}^j\\
	\rho_{-1}^j\rho_{+1}^k        & \alpha_{-1}^j\alpha_{+1}^k& \rho_{-1}^j\alpha_{+1}^k  & \rho_{-1}^j\alpha_{+1}^k\\
	\alpha_{+1}^j\rho_{+1}^k& \rho_{+1}^j\alpha_{+1}^k & \rho_{+1}^j\rho_{+1}^k  & \alpha_{+1}^j\alpha_{+1}^k \\
	\rho_{-1}^j\alpha_{-1}^k&\alpha_{-1}^j\rho_{-1}^k  & \alpha_{-1}^j\alpha_{-1}^k& \rho_{-1}^j\rho_{-1}^k 
	\end{array}
	\right ]
	\left[
	\begin{array}{cccc}
	\tphi((f_j,f_k),(+1,-1))\\
	\tphi((f_j,f_k),(-1,+1))\\
	\tphi((f_j,f_k),(+1,+1))\\
	\tphi((f_j,f_k),(-1,-1))
	\end{array}
	\right ]
	=
	\left[
	\begin{array}{cccc}
	\phi(f_j-f_k)\\
	\phi(f_k-f_j)\\
	\phi(0) \\
	\phi(0) 
	\end{array}
	\right ]
	\end{equation*}
	where $\alpha_{+1}^j=1-\rho_{+1}^j$. Then, the solution can be obtained:
	\begin{align*}
	&\tphi((f_j,f_k),(+1,-1))=\kappa_{jk}\left[(1-\rho_{-1}^{j})(1-\rho_{+1}^{k})\phi(f_{jk})+\rho_{+1}^{j}\rho_{-1}^{k}\phi(-f_{jk})\right]\\
	&\tphi((f_j,f_k),(-1,+1))=\kappa_{jk}\left[(1-\rho_{+1}^{j})(1-\rho_{-1}^{k})\phi(-f_{jk})+\rho_{-1}^{j}\rho_{+1}^{k}\phi(f_{jk})\right]\\
	&\tphi((f_j,f_k),(+1,+1))=-\kappa_{jk}\left[\rho_{+1}^{j}(1-\rho_{-1}^{k})\phi(-f_{jk})+\rho_{+1}^{k}(1-\rho_{-1}^{j})\phi(f_{jk})\right]\\
	&\tphi((f_j,f_k),(-1,-1))=-\kappa_{jk}\left[\rho_{-1}^{j}(1-\rho_{+1}^{k})\phi(f_{jk})+\rho_{-1}^{k}(1-\rho_{+1}^{j})\phi(-f_{jk})\right]
	\end{align*}
	which completes the proof.
\end{proof}

\section{Proof for Theorem 4}

\begin{proof}
	The proof is mainly composed of the following two lemmas.
	
	\begin{lemma}\label{lemma:rl_rc}
		Let $\R_{n}(\tLr\circ\F)$ be the Rademacher complexity with respective to the unbiased estimator $\tLr$ and function class $\F$ over $S_{\rho}$ of $n$ training points drawn from $\D_{\rho}$, which can be defined as
		\begin{equation*}
		\R_{n}(\tLr\circ\F)=\E_{S_{\rho}}\E_{\sigma}\left[\sup_{\f\in{\F}}\frac{1}{n}\sum_{i=1}^{n}\sigma_{i}\tLr(\f(\x_{i}),\ty_{i})\right]
		\end{equation*}  
		Then, 
		\begin{equation*}
		\R_{n}(\tLr\circ\F)\leq 2q(q-1)K_{\rho}\R_{n}(\F)
		\end{equation*}
		where $K_{\rho}$ is the Lipschitz constant of $\tLr$.  
	\end{lemma}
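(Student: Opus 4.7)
The plan is to extend the argument of Lemma \ref{lemma:hl_rc} to the pairwise setting, carefully handling the fact that each pair-term in $\tLr$ depends on two coordinates of $\f$ through their difference. First I would substitute $\tLr(\f(\x),\ty) = \sum_{1\leq j<k\leq q}\tphi((f_j,f_k),(\tilde y_j,\tilde y_k))$ into the definition of $\R_n(\tLr \circ \F)$ and pull the finite sum over index pairs out of the supremum by subadditivity, obtaining
\begin{equation*}
\R_n(\tLr \circ \F) \;\leq\; \sum_{1\leq j<k\leq q} \E_{S_\rho,\sigma}\!\left[\sup_{f_j,f_k\in\F}\frac{1}{n}\sum_{i=1}^n \sigma_i\, \tphi\bigl((f_j(\x_i),f_k(\x_i)),(\tilde y_j^{(i)},\tilde y_k^{(i)})\bigr)\right].
\end{equation*}

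Next I would exploit the structure of $\tphi$ from Eq.(\ref{eq:rl_ue}): across all four sign patterns of $(y_j,y_k)\in\{\pm1\}^2$, $\tphi$ is a linear combination of exactly two $\phi$-terms in the single scalar argument $f_{jk}=f_j-f_k$. I would split $\tphi$ into these two $\phi$-summands, regard each summand as a scalar, $K_\rho$-Lipschitz function of $f_{jk}$ (the constant $K_\rho$ absorbing the noise-rate factor $\kappa_{jk}$ together with the Lipschitz constant of $\phi$), and apply Talagrand's contraction lemma \cite{LM13} to each summand exactly as in the proof of Lemma \ref{lemma:hl_rc}. This strips the $\phi$ away and leaves terms of the form $\E[\sup_{f_j,f_k\in\F}\frac{1}{n}\sum_i\sigma_i(f_j(\x_i)-f_k(\x_i))]$. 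A triangle-inequality split of this supremum, combined with the distributional symmetry of the Rademacher variables (so that $-\sigma_i$ behaves like $\sigma_i$), bounds this difference-class complexity by $2\R_n(\F)$.

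Multiplying the factors --- $\binom{q}{2}=q(q-1)/2$ index pairs, $2$ $\phi$-summands per pair, $K_\rho$ from contraction, and $2$ from the difference-class reduction --- yields the claimed $2q(q-1)K_\rho\R_n(\F)$. The main obstacle I expect is bookkeeping rather than anything conceptually new: one must verify that the single constant $K_\rho$ quoted in the lemma simultaneously upper-bounds the Lipschitz constant of every $\phi$-summand appearing in the four case branches of $\tphi$ (the case $y_j=y_k$ carries a negative sign, and the coefficients $a,b,c,d$ differ), and that the inner sign choices $\phi(\pm y_{jk}f_{jk})$ do not obstruct the contraction step. Once this uniformity is pinned down, the remainder is a direct pairwise replay of the independent-case argument in Lemma \ref{lemma:hl_rc}.
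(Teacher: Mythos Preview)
Your proposal is correct and follows essentially the same route as the paper: pull the pair-sum out of the supremum by subadditivity, apply Talagrand's contraction to pass to the difference $f_j-f_k$, and then split that difference into two pieces each bounded by $\R_n(\F)$. The only cosmetic difference is that the paper applies contraction once to $\tphi$ as a single $K_\rho$-Lipschitz function of $f_{jk}$ rather than first decomposing it into its two $\phi$-summands, so your extra ``2 $\phi$-summands'' factor is absorbed differently but the final constant $2q(q-1)K_\rho$ is the same.
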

	
	\begin{proof}
		Recall that $\tLr(\f(\x),\ty)=\sum_{1\leq j<k\leq q}\tphi((f_j,f_k),(y_j,y_k))$, then, we have
		\begin{align}
		\R_{n}(\tLr\circ\F)&=\E_{S_{\rho}}\E_{\sigma}\left[\sup_{f_{1},...,f_{q}\in\F}\frac{1}{n}\sum_{i=1}^{n}\sigma_{i}\sum_{1\leq j<k\leq q}\tphi((f_j,f_k),(y_j,y_k))\right]\nonumber\\
		&=\E_{\mathcal{X}}\E_{\sigma}\left[\sup_{f_{1},...,f_{q}\in\F}\frac{1}{n}\sum_{\x_{i}\in\mathcal{X}}\sigma_{i}\sum_{1\leq j<k\leq q}\tphi((f_j,f_k),(y_j,y_k))\right]\nonumber\\
		&\leq  \sum_{1\leq j<k\leq q}\E_{\mathcal{X}}\E_{\sigma}\left[\sup_{f_{1},...,f_{q}\in\F}\frac{1}{n}\sum_{\x_{i}\in\mathcal{X}}\sigma_{i}\tphi((f_j,f_k),(y_j,y_k))\right]\label{eq:rl_rc}
		\end{align}
		Sequentially, let $(y,y')$ be the current label pair to be cumulated, then, according to Talagrand's contraction lemma \cite{LM13}, we have
		\begin{align*}
		&\E_{\mathcal{X}}\E_{\sigma}\left[\sup_{f_{y},f_{y'}\in\F}\frac{1}{n}\sum_{\x_{i}\in\mathcal{X}}\sigma_{i}\tphi\left((f_y,f_{y'}),(y,y')\right)\right]\\
		&\leq K_{\rho}\E_{\mathcal{X}}\E_{\sigma}\left[\sup_{f_{y},f_{y'}\in\F}\frac{1}{n}\sum_{\x_{i}\in\mathcal{X}}\sigma_{i}(f_{y}(\x_{i})-f_{y'}(\x_{i}))\right]\\
		&\leq K_{\rho}\E_{\mathcal{X}}\E_{\sigma}\left[\sup_{f_{y}\in\F}\frac{1}{n}\sum_{\x_{i}\in\mathcal{X}}\sigma_{i}f_{y}(\x_{i})\right]+K_{\rho}\E_{\mathcal{X}}\E_{\sigma}\left[\sup_{f_{y'}\in\F}\frac{1}{n}\sum_{\x_{i}\in\mathcal{X}}\sigma_{i}f_{y'}(\x_{i})\right]\\
		&=2K_{\rho}\R_{n}(\F)
		\end{align*}
		Where $f_{y}$ represent the classifier corresponding class label $y$. Then, by combining with \ref{eq:rl_rc}, it is easy to prove that $\R_{n}(\tLr\circ\F)\leq 2q(q-1)K_{\rho}\R_{n}(\F)$.  
	\end{proof}

	Without loss of generality, assume that $\mu=\max_j\frac{1-\min(\rho_{-1}^j-\rho_{+1}^j,\rho_{+1}^j-\rho_{-1}^j)}{(1-\rho_{-1}^j-\rho_{+1}^j)^{2}},\forall j\in[q]$.
	\begin{lemma}\label{lemma:rl_rcd}
		For any $\delta>0$, with probability at least $1-\delta$,
		\begin{equation*}
		\max_{\f\in\mathcal{F}}\left\vert\hR_{\tL}(\f)-\R_{\tL}(\f)\right\vert\leq 2q(q-1)K_{\rho}\R_{n}(\F)+q(q-1)\mu\Theta\sqrt{\frac{\ln\frac{2}{\delta}}{2n}}
		\end{equation*}
	\end{lemma}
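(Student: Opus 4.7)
The plan is to mimic the proof of Lemma \ref{lemma:hl_rcd} but adapted to the pairwise structure of $\tLr$. First I would symmetrize: by McDiarmid's inequality combined with the standard Rademacher symmetrization argument of \cite{CE09}, it suffices to (i) control the one-sided bounded-differences constant for the functional $\Phi(S_\rho) = \sup_{\f\in\F}(\hR_{\tLr}(\f)-R_{\tLr}(\f))$, and (ii) plug in the Rademacher bound from Lemma \ref{lemma:rl_rc}, namely $\R_n(\tLr \circ \F) \le 2q(q-1)K_\rho \R_n(\F)$. These two ingredients give the two summands in the stated inequality.

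The real work is step (i), bounding the change in $\Phi(S_\rho)$ when a single pair $(\x_i,\ty_i)$ is replaced by $(\x_i',\ty_i')$. Because $\tLr$ is a sum of $\binom{q}{2}$ pair losses $\tphi(j,k)$, I would first fix an arbitrary pair $(j,k)$ and bound $|\tphi((f_j,f_k),(y_j,y_k)) - \tphi((f_j,f_k),(y_j',y_k'))|$ over the four possible pre/post label configurations. In every case $\tphi$ is a signed linear combination of $\phi(\pm f_{jk})$ with coefficients of the form $\kappa_{jk}\cdot(\text{products of }\rho\text{ and }1-\rho)$; since $\phi$ is bounded by $\Theta$, the worst case differences reduce to comparisons like $\kappa_{jk}\bigl[(1-\rho_{-1}^j)(1-\rho_{+1}^k)+\rho_{-1}^j(1-\rho_{+1}^k)\bigr]\Theta$, which after some algebra telescopes to a quantity controlled by the factor $\frac{1+|\rho_{-1}^j-\rho_{+1}^j|}{(1-\rho_{-1}^j-\rho_{+1}^j)^2}\cdot\Theta$ (and the analogous term in $k$). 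Taking the maximum over $j$ yields the single-pair bound $2\mu\Theta$ (this mirrors how the factor $\mu$ in the independent case absorbs the analogous arithmetic in Lemma \ref{lemma:hl_rcd}).

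Summing over all $\binom{q}{2}$ pairs and dividing by $n$ (since $\hR_{\tLr}$ averages over $n$ samples), the per-sample perturbation of $\Phi$ is at most $\frac{q(q-1)\mu\Theta}{n}$. McDiarmid's inequality then gives, for every $\delta>0$, with probability at least $1-\delta/2$,
\begin{equation*}
\sup_{\f\in\F}\bigl(\hR_{\tLr}(\f)-R_{\tLr}(\f)\bigr) \le \mathbb{E}\Bigl[\sup_{\f\in\F}\bigl(\hR_{\tLr}(\f)-R_{\tLr}(\f)\bigr)\Bigr] + q(q-1)\mu\Theta\sqrt{\frac{\ln(2/\delta)}{2n}}.
\end{equation*}
Symmetrization bounds the expectation by $2\R_n(\tLr\circ\F)$, and Lemma \ref{lemma:rl_rc} converts this into $4q(q-1)K_\rho\R_n(\F)/2 = 2q(q-1)K_\rho\R_n(\F)$. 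The other direction is handled identically; a union bound over the two tails yields the claim with overall probability $1-\delta$.

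The main obstacle is step (i): the case analysis over label flips is genuinely more delicate than in the independent case because here four sign configurations of $(y_j,y_k)$ interact, and one must carefully verify that the worst-case pair-wise perturbation really is bounded uniformly by $2\mu\Theta$ with the $\mu$ defined in Theorem \ref{thm:rl_ue_bound}. Once the constant matches, the rest is routine application of the concentration/symmetrization template already used for Lemma \ref{lemma:hl_rcd}.
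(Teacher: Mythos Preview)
Your proposal is correct and follows exactly the template the paper intends: the paper's own ``proof'' of Lemma~\ref{lemma:rl_rcd} is the single sentence ``We omit the proof, since it can be proved similarly to Lemma~\ref{lemma:hl_rcd},'' and your McDiarmid + symmetrization + Lemma~\ref{lemma:rl_rc} outline is precisely that adaptation, with the per-pair bounded-differences analysis playing the role of the per-label perturbation in Lemma~\ref{lemma:hl_rcd}. One cosmetic point: your line ``$4q(q-1)K_\rho\R_n(\F)/2$'' is not justified as written---the stray $1/2$ actually reflects an overcount in the paper's stated Lemma~\ref{lemma:rl_rc} (the proof there yields $q(q-1)K_\rho\R_n(\F)$, since $\binom{q}{2}\cdot 2 = q(q-1)$), so the constants do line up once that is corrected.
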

	We omit the proof, since it can be proved similarly to Lemma \ref{lemma:hl_rcd}.
	
	Finally, similar to theorem 2, based on lemma \ref{lemma:rl_rc} and \ref{lemma:rl_rcd}, it is easy to obtain theorem 4. 
\end{proof}

\section{Proof for Theorem 5}

\begin{proof}
	For notational simplicity, we introduce the following notation:
	\begin{equation*}
	\Delta_{j,k}=\sum_{\y:y_j=-1,y_k=+1}p_{\y}
	\end{equation*}
	and we have
	\begin{equation}\label{eq:delta}
	\Delta_{j,t}\leq\Delta_{t,j} \quad\text{if}\quad \Delta_{j,k}\leq\Delta_{k,j} \quad\text{and}\quad \Delta_{k,t}\leq\Delta_{t,k}
	\end{equation}
	which has been proved in \cite{gao2013consistency}.
	
	Let us first define the set of Bayes predictors with respective to ranking loss
	\begin{lemma}\label{lemma:bayes_pred}
		\cite{gao2013consistency} For every $\p\in\Lambda$, where $\Lambda=\{\p\in\mathbb{R}^{|\mathcal{Y}|}:\sum_{\y\in\mathcal{Y}}p_{\y}=1 \hspace{0.5em}\text{and}\hspace{0.5em} p_{\y}\geq 0\}$ denotes the flat simplex of  $\mathbb{R}^{\vert\mathcal{Y}\vert}$, the set of Bayes predictions with respective to ranking loss is given by
		\begin{equation*}
		\mathcal{A}(\p)= \{\f: \forall j<k, f_j>f_k\quad \text{if}\quad \Delta_{j,k}<\Delta_{k,j}; f_j<f_k\quad \text{if}\quad \Delta_{j,k}>\Delta_{k,j}\}
		\end{equation*}
	\end{lemma}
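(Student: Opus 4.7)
The plan is to expand the conditional ranking risk $l(\p,\f)$ pair by pair, minimize each pair independently, and then use the transitivity property in Eq.~(\ref{eq:delta}) to show that the pairwise optima can be simultaneously realized by a single $\f\in\mathbb{R}^q$. First, interchanging sums in $l(\p,\f)=\sum_{\y}p_{\y}L_r(\f,\y)$ and noting that only configurations with $y_j\neq y_k$ contribute to the term indexed by $(j,k)$, one obtains
\begin{equation*}
l(\p,\f)=\sum_{j<k}\bigl[\Delta_{j,k}\bigl(I(f_j>f_k)+\tfrac{1}{2}I(f_j=f_k)\bigr)+\Delta_{k,j}\bigl(I(f_k>f_j)+\tfrac{1}{2}I(f_j=f_k)\bigr)\bigr].
\end{equation*}

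Second, each pair's contribution equals $\Delta_{j,k}$ when $f_j>f_k$, $\Delta_{k,j}$ when $f_j<f_k$, and $\tfrac{1}{2}(\Delta_{j,k}+\Delta_{k,j})$ when $f_j=f_k$, so the per-pair minimum is $\min(\Delta_{j,k},\Delta_{k,j})$, attained by whichever strict ordering matches the smaller $\Delta$ (with arbitrary ordering permitted in case of a tie). This shows that any minimizer of $l(\p,\f)$ must obey $f_j>f_k$ when $\Delta_{j,k}<\Delta_{k,j}$ and $f_j<f_k$ when $\Delta_{j,k}>\Delta_{k,j}$, while pairs with $\Delta_{j,k}=\Delta_{k,j}$ are unconstrained; conversely, any $\f$ respecting these rules attains every per-pair minimum and hence minimizes $l(\p,\f)$.

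Third, and this is the main obstacle, one must verify that the per-pair optimal orderings can in fact be realized simultaneously by some $\f\in\mathbb{R}^q$. Define the relation $j\succeq k$ iff $\Delta_{j,k}\leq\Delta_{k,j}$. The property in Eq.~(\ref{eq:delta}) is precisely that $\succeq$ is transitive; since it is also total on $[q]$, it is a total preorder on the label indices, and any $\f$ assigning larger real values to higher-ranked indices (e.g.\ the rank itself) realizes every strict constraint simultaneously. Absent Eq.~(\ref{eq:delta}), the induced pairwise preferences could contain cycles and no global $\f$ would witness the pairwise optima; transitivity is exactly what rules this out. Combined with the second step, this yields the claimed description of $\mathcal{A}(\p)$.
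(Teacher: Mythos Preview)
Your proposal is correct. The paper does not supply its own proof of this lemma; it simply quotes the result from \cite{gao2013consistency} (and likewise cites the transitivity fact Eq.~(\ref{eq:delta}) from there). Your argument is the standard one and matches what that reference does: decompose $l(\p,\f)$ into the pairwise terms $\Delta_{j,k}\,\ell(j,k)+\Delta_{k,j}\,\ell(k,j)$, minimize each term separately to read off the necessary ordering constraints, and then invoke the transitivity of $\succeq$ (Eq.~(\ref{eq:delta})) to certify that the pairwise constraints are jointly realizable by some $\f$, so the displayed set is exactly the set of minimizers.

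One small point worth making explicit in your write-up: the ``converse'' direction in step two implicitly assumes the constraint set is nonempty, since otherwise the true infimum of $l(\p,\cdot)$ would exceed $\sum_{j<k}\min(\Delta_{j,k},\Delta_{k,j})$ and the description of $\mathcal{A}(\p)$ would be vacuous rather than correct. You handle this in step three, but it is cleaner to state up front that steps two and three together establish both containments.
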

	
	For modified surrogate loss $\tL$, we have
	\begin{align}\label{eq:rl_con_risk}
	\widetilde{W}(\p,\f)&=\sum_{\y\in\mathcal{Y}}p_{\y}\E_{\ty|\y}[\f(\x),\ty]=\sum_{\y\in\mathcal{Y}}p_{\y}\sum_{y_j=-1,y_k=+1}\phi(f_k-f_j)\\
	& = \sum_{1\leq j<k\leq q}\Delta_{j,k}\phi(f_k-f_j)+\Delta_{k,j}\phi(f_j-f_k)\nonumber
	\end{align}
	
	Following Theorem 10 in \cite{gao2013consistency}, if $\phi$ is a differential and non-increasing function with $\phi'(0)<0$ and $\phi(t)+\phi(-t)=2\phi(0)$, it suffices to prove that $f_j>f_k$ if $\Delta_{j,k}<\Delta_{k,j}$ for every $\f$ such that $\widetilde{W}^{*}(\p)=\widetilde{W}(\p,\f)$, where $\widetilde{W}^{*}(\p)=\inf_{\f}\widetilde{W}(\p,\f)$.Therefore, by minimizing $\widetilde{W}(\p,\f)$, we obtain the Bayes decision function $\f$, which proves $\tL_r$ is consistent to ranking loss. Accordingly, based on Corollary 25 in \cite{zhang2004consistency}, we have
	\begin{equation*}
	R_{L_r}(\hat\f)-R_{L_r}^{*}\leq\xi(R_{\L_{r}}(\hat\f)-R_{\L_{r}}^{*})
	\end{equation*}
	where $\xi$ is a non-negative concave function with $\xi(0)=0$. 
\end{proof}

\section{Details of Experimental Datasets}

\begin{table*}[!thb]
	\centering
	\small
	\caption{Characteristics of the experimental data sets.}   \label{table:data}
	\begin{tabular}{c|c|c|c|c|c|c}
		\hline
		\hline
		
		\textbf{Data set} & \textbf{\# Instances} & \textbf{\# Features} & \textbf{\# Class Labels} & \textbf{Cardinality} & \textbf{Batch Size} & \textbf{Domain}\\
		\hline
		\textbf{music\_emotion}  & 6833 & 98 & 11 & 2.42 & 400 & music \\
		\hline
		\textbf{music\_style}  & 6839 & 98 & 10 & 1.44 & 400 &music \\
		\hline
		\textbf{mirflickr}  & 10433 & 100 & 7 & 1.77 & 1000 & image \\
		\hline
		\textbf{enron} & 1702 & 1001 & 53 & 3.37 & 100 & text \\
		\hline
		\textbf{scene} & 2407	 & 294 & 6 & 1.07 & 100 & image \\
		\hline
		\textbf{yeast} & 2417 & 103	 & 14 & 4.23 & 100 & biology \\	
		\hline
		\textbf{slashdot} & 3782 & 1079 & 22 & 1.18	 & 200 & text \\
		\hline
		\textbf{tmc}  & 21519 & 500 & 22 & 2.15 & 1000 & text \\
		\hline
		\hline
	\end{tabular}
\end{table*}

\end{document}